\def\eqref#1{equation~\ref{#1}}
\def\1{\bm{1}}
\DeclareMathAlphabet{\mathsfit}{\encodingdefault}{\sfdefault}{m}{sl}
\SetMathAlphabet{\mathsfit}{bold}{\encodingdefault}{\sfdefault}{bx}{n}
\newcommand{\lnm}{\mathcal{L}(\mathcal{M}_L)}
\newtheorem{theorem}{\textbf{Theorem}}
\newtheorem{definition}{\textbf{Definition}}
\newcommand{\ptp}{\pi_{\mathrm{PTP}}}
\newcommand{\hptp}{\hat\pi_{\mathrm{PTP}}}
\newcommand{\plnm}{\mathrm{Pref}(\lnm)}
\newtheorem{proposition}{\textbf{Proposition}}
\newtheorem{lemma}{\textbf{Lemma}}
\newtheorem{remark}{\textbf{Remark}}
\newcommand{\ML}[1]{\textcolor{red}{ML: #1}}
\newcommand{\pr}[1]{{\color{black}#1}} 
\renewcommand{\AA}[1]{{\color{cyan}AA: #1}}
\newcommand{\figureref}[1]{Figure~\ref{#1}}
\newcommand{\sectionref}[1]{Section~\ref{#1}}
\newcommand{\appendixref}[1]{Appendix~\ref{#1}}
\renewcommand{\eqref}[1]{(\ref{#1})}
\title{Learning Reward Machines from Partially Observed Policies}
\author{\name Mohamad Louai Shehab \email mlshehab@umich.edu \\
      \addr Department of Robotics\\
      University of Michigan, Ann Arbor, USA
      \AND
      \name Antoine Aspeel \email antoineaspeel@centralesupelec.fr \\
      \addr Universite Paris-Saclay, CNRS, Centrale-Sup{\'e}lec\\
      Laboratoire des Signaux et Syst{\`e}mes, Gif-sur-Yvette, France
      \AND
      \name Necmiye Ozay \email necmiye@umich.edu\\
      \addr Department of Electrical Engineering and Computer Science\\
      Department of Robotics\\
      University of Michigan, Ann Arbor, USA}
\begin{document}

\maketitle

\begin{abstract}
Inverse reinforcement learning is the problem of inferring a reward function from an optimal policy or demonstrations by an expert. In this work, it is assumed that the reward is expressed as a reward machine whose transitions depend on atomic propositions associated with the state of a Markov Decision Process (MDP). Our goal is to identify the true reward machine using finite information. To this end, we first introduce the notion of a prefix tree policy which associates a distribution of actions to each state of the MDP and each attainable finite sequence of atomic propositions. Then, we characterize an equivalence class of reward machines that can be identified given the prefix tree policy. Finally, we propose a SAT-based algorithm that uses information extracted from the prefix tree policy to solve for a reward machine. It is proved that if the prefix tree policy is known up to a sufficient (but finite) depth, our algorithm recovers the exact reward machine up to the equivalence class. This sufficient depth is derived as a function of the number of MDP states and (an upper bound on) the number of states of the reward machine. These results are further extended to the case where we only have access to demonstrations from an optimal policy. Several examples, including discrete grid and block worlds, a continuous state-space robotic arm, and real data from experiments with mice, are used to demonstrate the effectiveness and generality of the approach.
\end{abstract}

\section{Introduction}

Several frameworks exist for solving complex multi-staged tasks, including hierarchical reinforcement learning (HRL) \citep{pateria2021hierarchical}, reward machines (RMs) \citep{icarte2018using} and linear temporal logic (LTL) specifications \citep{chou2020explaining,vaezipoor2021ltl2action}. HRL leverages a decomposition of tasks into subtasks, enabling agents to focus on solving smaller, manageable problems before integrating solutions into a higher-level policy \citep{sutton1999between}. On the other hand, RM and its generalizations \citep{corazza2022reinforcement} encode task-specific knowledge as finite-state machines, capturing temporal dependencies and logical constraints in a concise and interpretable manner, similar to LTL. This structure simplifies policy learning and improves efficiency, especially in environments with long horizons or sparse rewards. 

As an extension to inverse reinforcement learning (IRL) \citep{ng2000algorithms}, one could ask the question of learning RMs, which enables agents to autonomously extract structured representations of tasks, significantly enhancing their ability to solve complex, temporally extended problems. By learning reward machines directly from demonstrations, agents can adapt to tasks without requiring manually specified task representations, making this approach scalable and practical for real-world applications, such as robotic manipulation and autonomous vehicle navigation \citep{camacho2019ltl,icarte2023learning, xu2020joint,baert2024reward,camacho2021reward}. This capability is especially valuable in environments where high-level task-relevant features (aka, atomic propositions) are observable, underscoring the importance of learning RMs in advancing autonomous decision-making systems. \pr{For instance, in a high-level indoor navigation or patrolling task, semantic room labels can act as such propositions.} Beyond autonomy applications, IRL and RM learning can also be used to infer the agent's (e.g., humans') intentions to design incentives or better decision making environments \citep{nitschke2024amber}. \pr{As will be demonstrated, such intent inference can also be used in neuroscience to analyze animal behavior and decision making.}

Some of previous work on learning reward machines from data either assumes that the machine's states are observed \citep{araki2019learning} or the reward is observed \citep{xu2020joint, icarte2023learning, hu2024reinforcement,abate2023learning}. In the latter case the problem becomes finding a reward machine consistent with the observed input-output traces. 
Other work \citep{hasanbeig2024symbolic,hasanbeig2021deepsynth,furelos2020induction} infers reward machines by combining automata synthesis with reinforcement learning and querying the environment for experiences. Others \citep{xu2021active,memarian2020active} use the standard $L^*$ algorithm for automata learning \citep{angluin1987learning} to learn a consistent reward machine. This assumes access to an oracle that can answer membership and conjectures queries. There are also works that only use observations of atomic propositions \citep{camacho2021reward,baert2024reward}, similar to us; however, they are limited to single-stage goal reaching tasks, where the RM has a simple structure that is merely used to obtain dense rewards. 
In parallel, several works aim to infer an LTL specification from demonstrations satisfying and/or violating the specification \citep{neider2018learning, vazquez2020maximum}, requiring a potentially large, labeled data set. Since LTL learning problem is inherently ill-posed, several regularization techniques are used such as formula templates or concept classes.


To the best of the authors' knowledge, no prior work has formalized and solved the problem of learning reward machines from partially observed optimal policies directly without the need to observe the rewards or the machine's state. The two main challenges of this setting are 1) partial observability (the reward is not observed, only the atomic propositions are observed), 2) partial reachability (not all transitions of the reward machine are visited in a given environment). In this work, we address these challenges by first characterizing what can be learned in this setting (i.e., an equivalence class of reward machines) and then proposing a SAT-based algorithm, which provably learns a reward machine equivalent to the underlying true one. The key insight of our algorithm is to identify pairs of atomic proposition prefixes, namely \emph{negative examples}, that lead to different nodes of the underlying reward machine from the observable optimal prefix-based policy, and encoding these examples as constraints in the SAT problem. We show that our method can be applied even when the optimal policy is accessible only through a finite set of optimal trajectories. To this end, we approximate the policy from the data and replace the SAT problem with a variant called weighted MAX-SAT that provides robustness to incorrectly labeled negative examples. We demonstrate the efficacy of our algorithm in diverse settings, including grid-based MDPs, a robotic control task, and a real-world dataset of mouse navigation.




{\bf Notation:} Given a set $X$, we denote by $\Delta(X)$ and $|X|$ the set of all valid probability distributions on $X$ and the cardinality of $X$, respectively. $\mathbf{1}(X)$ denotes the indicator function of $X$. $X^*, X^\omega$ denote the set of all finite/infinite sequences of elements in $X$. For a sequence $\tau$ and non-negative integers $i,j$, $\tau_i$ denotes the $i^{th}$ element of $\tau$; $|\tau|$ denotes the length of $\tau$; $\tau_{end}$ denotes the last element of $\tau$ when $\tau$ is finite; $\tau_{i:j}$ denotes the subsequence starting with the $i^{th}$ element and ending with the $j^{th}$; and $\tau_{:i}$ denotes the subsequence ending with the $i^{th}$ element.

\section{Preliminaries and Problem Statement}

\subsection{Markov Decision Processes and Reward Machines}\label{sec:mdp}
A Markov Decision Process (MDP) is a tuple $\mathcal{M} = (\mathcal{S},\mathcal{A}, \mathcal{P},\mu_0, \gamma, r)$, where $\mathcal{S}$ is a finite set of states, $\mathcal{A}$ is a finite set of actions, $\mathcal{P}:\mathcal{S}\times \mathcal{A} \to \Delta(\mathcal{S})$ is the Markovian transition kernel, $\mu_0 \in \Delta(\mathcal{S})$ is the initial state distribution, $\gamma \in [0,1)$ is the discount factor and $r:\mathcal{S}\times \mathcal{A}\times \mathcal{S}\to \mathbb{R}$ is the reward function. The set of feasible state trajectories for an MDP $\mathcal{M}$, denoted by $\mathcal{T}_s(\mathcal{M})$, is defined as:
\begin{align*}
    \mathcal{T}_s(\mathcal{M})
 = \{ (s_0, s_1, \ldots) \in &\mathcal{S}^\omega \mid \exists (a_0, a_1,\ldots)\in\mathcal{A}^\omega:\mathcal{P}(s_{t+1} \mid s_t, a_t) > 0, \forall t \}.
\end{align*}
When we want to refer to finite prefixes of $\mathcal{T}_s(\mathcal{M})$, we simply use $\mathcal{T}_s^{\mathrm{fin}}(\mathcal{M})$, and we omit $\mathcal{M}$ when it is clear from the context.

An MDP without the reward is referred to as an \emph{MDP model}, and is denoted by $\mathcal{M}/r$. MDP models can be decorated with labels. We denote such labeled MDP models as $\mathcal{M}_L = (\mathcal{S},\mathcal{A}, \mathcal{P},\mu_0, \gamma, L, \mathrm{AP})$, where $L:\mathcal{S}\to \mathrm{AP}$ is a labeling function that assigns to each state an atomic proposition, representing high-level conditions satisfied at that state, from the set $\mathrm{AP}$. A labeled MDP has a corresponding language $\mathcal{L}(\mathcal{M}_L)\subseteq (\mathrm{AP})^\omega$, with $\mathcal{L}(\mathcal{M}_L) \doteq \{\sigma \in (\mathrm{AP})^\omega \mid \sigma = L(\tau), \text{ where } \tau  \in \mathcal{T}_s(\mathcal{M}_L)\}$, where we overload $L$ to take in sequences. We also define the prefixes of a language as:
\begin{equation*}
\mathrm{Pref}(\mathcal{L}) = \{ w \in (\mathrm{AP})^* \mid \exists x \in \mathcal{L}, \text{ s.t. } w \text{ is a prefix of } x \}.
\end{equation*}
The set of reachable states for a proposition sequence $\sigma$ is:
\begin{align*}
    \mathrm{Reach}(\sigma) &= \{ s \in \mathcal{S} \mid  \tau \in \mathcal{T}_s^{\mathrm{fin}} \text{ s.t. } L(\tau) = \sigma, \tau_{|\tau|}=s  \}. 
\end{align*}

A Reward Machine (RM) is a tuple $\mathcal{R}=(\mathcal{U}, u_I, \mathrm{AP}, \delta_{\mathbf{u}}, \delta_{\mathbf{r}})$ which consists of a finite set of states $\mathcal{U}$, an initial state $u_I \in \mathcal{U}$, an input alphabet $\mathrm{AP}$, a (deterministic) transition function $\delta_{\mathbf{u}}: \mathcal{U} \times \mathrm{AP}\to \mathcal{U}$, and an output function $\delta_{\mathbf{r}}: \mathcal{U}\times \mathrm{AP} \to \mathbb{R}$. To avoid ambiguity between MDP states and RM states, the latter will be referred to as \emph{nodes}. The reward machine without the reward is denoted as $\mathcal{G} \triangleq \mathcal{R}/\delta_\mathbf{r}$, and we refer to it as a \emph{reward machine model}. We extend the definition of the transition function to define $\delta_{\mathbf{u}}^*: \mathcal{U} \times (\mathrm{AP})^* \to \mathcal{U}$ as $\delta_{\mathbf{u}}^*(u, l_0 , \cdots, l_k) = \delta_{\mathbf{u}}(\cdots(\delta_{\mathbf{u}}(\delta_{\mathbf{u}}(u, l_0),l_1), \cdots, l_k)$.  Given a state \( u \in \mathcal{U} \), we define \emph{the paths of \( u \)} as the input words which can be used to reach \( u \):
$$
\mathrm{Paths}(u) = \{w \in (\mathrm{AP})^* \mid \delta_{\mathbf{u}}^*(u_I, w) = u\}.
$$
We overload the operator $\mathrm{Reach}$ to include the set of MDP states reachable at $u$. It is given by:
\begin{equation*}
     \mathrm{Reach}(u) = \{s\in \mathrm{Reach}(\sigma) \mid \sigma \in \mathrm{Paths}(u)\}.
\end{equation*}

As a running example, we borrow the patrol task from \citep{icarte2018using}. Consider the room grid world shown in \figureref{fig:gridworldroom}. It is a 4 by 4 grid where the agent can move in the four cardinal directions, with a small probability of slipping to neighboring cells. We color-code different cells to denote the proposition label associated with the corresponding cell. For example, all cells colored green have the high level proposition $\mathrm{A}$. The agent is tasked to patrol the rooms in the order $\mathrm{A}\to\mathrm{B}\to\mathrm{C}\to\mathrm{D}$. This is captured by the reward machine shown in \figureref{fig:patrol_rm}. Assume the agent's state trajectory starts with $\tau = (\mathbf{a}_1,\mathbf{a}_2,\mathbf{a}_3,\mathbf{b}_4,\mathbf{c}_1)$. The proposition sequence associated with $\tau$ is $\sigma = \mathrm{AAABC}$. The RM nodes traversed by following $\tau$ are $(u_0, u_1,u_1,u_1, u_2,u_3)$. Since $\mathbf{c}_1$ is the only state than can be reached with $\sigma$, we have that $\mathrm{Reach}(\sigma) = \{\mathbf{c}_1\}$. Similarly, $\sigma \in \mathrm{Paths}(u_3)$. Each transition in the reward machine gives a reward of zero, except the transition from $u_3$ to $u_0$, i.e., $\delta_\mathbf{r}(u,l)=1$ if $(u,l)=(u_3,D)$, and zero otherwise.
 
\begin{figure}[h!]
    \centering
    \begin{subfigure}[t]{0.27\textwidth}
        \centering
        \includegraphics[width=\textwidth]{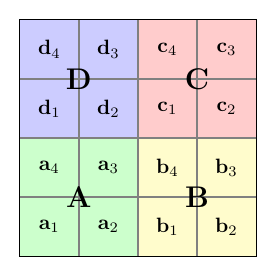}
        \caption{}
        \label{fig:gridworldroom}
    \end{subfigure}%
    \hspace{0.02\textwidth}%
    \begin{subfigure}[t]{0.38\textwidth}
        \centering
        \includegraphics[width=\textwidth]{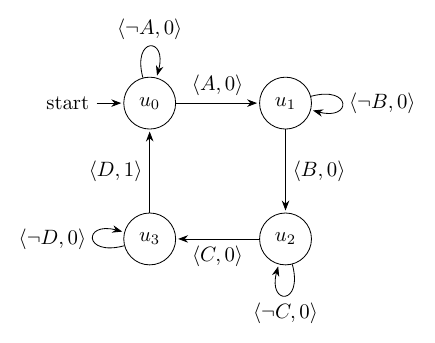}
        \caption{}
        \label{fig:patrol_rm}
    \end{subfigure}
    \hspace{0.02\textwidth}%
    \begin{subfigure}[t]{0.28\textwidth}
        \centering
        \includegraphics[width=\textwidth]{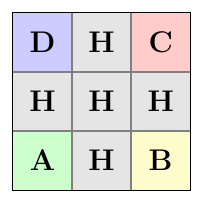}
        \caption{}
        \label{fig:hallway_patrol}
    \end{subfigure}
    \caption{(a) The room grid world. (b) The patrol reward machine. (c) The room grid world with a hallway.}
    \label{fig:overall}
\end{figure}


\vspace{-0.2cm}
\subsection{Markov Decision Process with a Reward Machine}\label{sec:prod_mdp}
A Markov decision process with a reward machine (MDP-RM) is a tuple $\mathcal{R}_{\mathcal{M}} = (\mathcal{M}/r,\mathcal{R},L)$ where $\mathcal{M}$ and $\mathcal{R}$ are defined as in Section \ref{sec:mdp}, and $L$ is a labeling function $L:\mathcal{S}\to \mathrm{AP}$. 
An MDP-RM can be equivalently seen as a product MDP $\mathcal{M}_\mathrm{Prod} = (\mathcal{S}', \mathcal{A}', \mathcal{P}', \mu_0^\prime, \gamma^\prime, r')$ where $\mathcal{S}' = \mathcal{S}\times \mathcal{U}$, $\mathcal{A}' = \mathcal{A}$, $\mathcal{P}'(s',u'| s,u,a) = \mathcal{P}(s'|s,a) \mathbf{1}(u' = \delta_\textbf{u}(u,L(s')))$, $\gamma^\prime = \gamma$, $\mu_0^\prime \in \Delta(\mathcal{S}\times \mathcal{U})$ with $\mu_0^\prime(s,u) = \mu_0(s)\mathbf{1}(u = u_I)$ and $r'(s,u,a,s',u') = \delta_\mathbf{r}(u,L(s^\prime))$. To make the notation compact, we denote the product state by $\bar s = (s,u)$. The product of an MDP model with a RM model is a product MDP model $\mathcal{G} \times \mathcal{M}_L = (\mathcal{S}', \mathcal{A}', \mathcal{T}', \mu_0^\prime, \gamma^\prime)$ defined similarly.


{A \emph{trajectory} of the product MDP $\mathcal{M}_\mathrm{Prod}$ is a sequence $(\bar s_{\emptyset}, a_{\emptyset}, \bar s_0, a_0, \bar s_1, a_1,\cdots)$, where $\bar s_{\emptyset} = (\emptyset, u_I)$ and $a_{\emptyset}= \emptyset$. An initial state $s_0$ is sampled from $\mu_0$. The introduction of $\bar s_{\emptyset}$ and $a_\emptyset$ at the start of the trajectory is to ensure that $s_0$ induces a transition in the reward machine. The reward machine thus transitions to $u_0 = \delta_\textbf{u}(u_I, L(s_0))$. The agent then takes action $a_0$ and transitions to $s_1$. Similarly, the reward machine transitions to $u_1 = \delta_\textbf{u}(u_0, L(s_1))$. The same procedure continues infinitely. }We consider the product policy $\pi_{\mathrm{Prod}}:\mathrm{Dom_{Prod}} \to \Delta(\mathcal{A})$ where $\mathrm{Dom_{Prod}}\subseteq \mathcal{S}\times\mathcal{U}$ is the set of accessible $(s,u)$ pairs in the product MDP. This policy is a function that describes an agent’s behavior by specifying an action distribution at each state.  We consider the Maximum Entropy Reinforcement Learning (MaxEntRL) objective given by:
\begin{equation}\label{eq:max_ent_obj}
    J_{\mathrm{MaxEnt}}(\pi;r') = \mathbb{E}^{\pi}_{\mu_0}[\sum\limits_{t=0}^{+\infty} \gamma^t \biggl(  r^\prime (\bar s_t,a_t, \bar s_{t+1}) + \lambda \mathcal{H}(\pi(.|\bar s_t)) \biggr)],
 \end{equation}
where $\lambda > 0$ is a regularization parameter, and $\mathcal{H}(\pi(.|\bar{s})) = -\sum\limits_{a\in \mathcal{A}} \pi(a|\bar{s})\log(\pi(a|\bar{s}))$ is the entropy of the policy $\pi$. The expectation is with respect to the probability distribution $\mathbb{P}^\pi_{\mu_0}$, the induced distribution over infinite trajectories following $\pi$, $\mu_0$, and the Markovian transition kernel $\mathcal{P}^\prime$ \citep{ziebart2008maximum}. The optimal policy $\pi_{\mathrm{Prod}}^*$, corresponding to a reward function $r'$, is the maximizer of (\ref{eq:max_ent_obj}), i.e.,
\begin{equation}\label{eq:opt_prob}
    \pi_{\mathrm{Prod}}^* = \arg \max\limits_{\pi} J_{\mathrm{MaxEnt}}(\pi;r').
\end{equation}
\emph{Optimal} product MDP trajectories are trajectories of the product MDP generated using $\pi_{\mathrm{Prod}}^*$. We overload this definition to \emph{optimal trajectories} of the MDP, which is generated from the optimal product MDP trajectories by simply removing the $u$ states. For the rest of the paper, \emph{optimal trajectories} or \emph{demonstrations} refer to the optimal trajectories of the MDP. 

\subsection{Prefix Tree Policy}
Since the RM is unknown and the state of the RM is unobserved, we need a representation of the agent's policy that is independent of the RM state $u$. We accomplish this by defining a prefix tree policy (PTP) as the function that associates a distribution over the actions to each state and each finite sequence of atomic propositions that can be generated by the MDP. It is denoted as $\pi_{\mathrm{PTP}}: \mathrm{Dom_{PTP}} \to \Delta(\mathcal{A})$, with $\mathrm{Dom_{PTP}}=\{(s,\sigma)\mid \sigma\in \mathrm{Pref}(\mathcal{L}(\mathcal{M}_L)) \text{ and } s\in\mathrm{Reach}(\sigma) \}$. An important remark here is that the agent is acting according to a policy $\pi_{\mathrm{Prod}}(a|s,u)$, since the agent has access to $u$. The PTP in turn encodes the information of the agent's product policy in terms of the variables that we have access to, namely the MDP states only. The relation between the two policies is governed by:
\begin{equation}\label{eq:induced}
    \pi_{\mathrm{PTP}} (a|s,\sigma) = \pi_{\mathrm{Prod}}(a|s, \delta_\textbf{u}^*(u_I, \sigma)),
\end{equation}
where $\sigma \in \plnm$. In particular, we say that the product policy $\pi_{\mathrm{Prod}}$ \emph{induces}  $\pi_{\mathrm{PTP}}$. We define the \emph{depth-$l$ restriction} of a PTP as its restriction to the set $\left(\cup_{j=1}^{l}\mathcal{S} \times (\mathrm{AP})^j\right)\cap\mathrm{Dom_{PTP}}$. That is the policy associated to words $\sigma$ of length up to $l$. It is denoted by $\ptp^{l}$.




The induced PTP captures both what is observable about a product policy and what is reachable on the product. Therefore, we can only learn a reward machine up to whatever information is available in its induced PTP. We formalize this with the following definition.

\begin{definition}
   Two reward machines are \textbf{policy-equivalent} with respect to a labeled MDP model if the product policies obtained by solving problem~\eqref{eq:opt_prob} for each of the reward machines induce the same prefix tree policy defined as in~\eqref{eq:induced}. Among all the reward machines that are policy equivalent with respect to a labeled MDP, we define a \textbf{minimal} reward machine as one with the fewest number of nodes.
\end{definition}

Several equivalence relations among reward machines in the literature are special cases of policy equivalence. For instance, when learning finite state machines from observed rewards \citep{xu2020joint, giantamidis2021learning}, two reward machines are said to be input-output equivalent if they produce the same reward sequence for the same atomic proposition sequence. Such input-output equivalent reward machines are clearly policy-equivalent. Furthermore, for a trivial, i.e., one state, reward machine, our definition reduces to the policy-equivalence definition in the standard inverse reinforcement learning problem \citep{shehab2024learning, cao2021identifiability}.

\subsection{Problem Statement}\label{sec:prob_stat}


Consider a labeled MDP model $\mathcal{M}_L$ and a prefix tree policy $\ptp^{\mathrm{true}}$ induced by an optimal solution of problem~\eqref{eq:opt_prob}. We are interested in the following two problems in this paper:
\begin{enumerate}
\item[(P1)] Does there always exist a depth-$l^*$ such that, given the labeled MDP model $\mathcal{M}_L$, a bound $u_{\mathrm{max}}$ on the number of nodes of the underlying reward machine, and the depth-$l^*$ restriction $\ptp^{\mathrm{true},l^*}$ of the true prefix tree policy, it is possible to learn a reward machine that is policy-equivalent to the underlying one?
\item[(P2)] If $l^*$ in problem (P1) exists, find a minimal reward machine that is policy-equivalent to the underlying one.
\end{enumerate}
In what follows, we first provide an algorithm that takes the labeled MDP model $\mathcal{M}_L$, the depth-$l$ restriction $\ptp^{\mathrm{true},l}$ of the prefix tree policy $\ptp^{\mathrm{true}}$ for some arbitrary $l$, and the bound $u_{\mathrm{max}}$, and computes a reward machine that induces a prefix tree policy $\ptp^{\mathrm{learned}}$ with the same depth-$l$ restriction, i.e., $\ptp^{\mathrm{learned},l}=\ptp^{\mathrm{true},l}$. Then, we prove the existence of a sufficient depth-$l^*$ in (P1), for which this algorithm solves problem (P2). We provide an upper bound on $l^*$ in terms of the number of states in the MDP and the number of nodes in the RM.



\if 0
\section{Learnability}

While IRL is usually ill-posed as many reward functions can be consistent with a given optimal policy, learning a reward machine from optimal demonstrations is made even more challenging due to 1) partial observability (only propositions and optimal action distributions are observed but not the reward values), 2) partial reachability (not all nodes/transitions of a reward machine would be visited on a given labeled MDP). To address these challenges, we introduce an equivalence relation among reward machines for a given labeled MDP. 

\begin{definition}
   Two reward machines are \textbf{policy-equivalent} with respect to a labeled MDP for the max entropy objective if the product policies obtained by solving \eqref{eq:max_ent_obj} for each of the reward machines induce the same prefix tree policy defined as in \eqref{eq:induced}. Among all the reward machines that are policy equivalent with respect to a labeled MDP, we define a \textbf{minimal} reward machine as one with the fewest number of nodes.
\end{definition}

The induced prefix tree policy captures both what is observable about a product policy and what is reachable on the product. Therefore, in the IRL problem we consider, we can only learn a reward machine up to the above equivalence class. Next, we discuss how existing equivalence relations in the literature are special cases of policy equivalence.

Following the literature on learning finite state machines \citep{xu2020joint, giantamidis2021learning}, two reward machines $\mathcal{R}_1, \mathcal{R}_2$, with $\mathcal{R}_i = (\mathcal{U}^i, u_I^i, {\mathrm{AP}}, \delta_\textbf{u}^i, \delta_\mathbf{r}^i)$ are said to be \emph{input-output equivalent} if $\forall \sigma \in ({\mathrm{AP}})^*: \delta_\mathbf{r}^{1,*}(u_I^1, \sigma) =\delta_\mathbf{r}^{2,*}(u_I^2, \sigma)$. When the input language is restricted to a set $\mathcal{L}\subseteq ({\mathrm{AP}})^*$, as is the case when RMs are used with MDPs, we refine this definition and say two reward machines $\mathcal{R}_1, \mathcal{R}_2$ are \emph{input-output equivalent over $\mathcal{L}$} if $\forall \sigma \in \mathcal{L}: \delta_\mathbf{r}^{1,*}(u_I^1, \sigma) =\delta_\mathbf{r}^{2,*}(u_I^2, \sigma)$. The following result then follows immediately from the fact that for input-output equivalent RMs, the optimal product policy from \eqref{eq:max_ent_obj} is the same. 

\begin{proposition} If two reward machines are input-output equivalent, then they are policy-equivalent with respect to all labeled MDPs sharing the same atomic propositions. Moreover, given a labeled MDP $\mathcal{M}_L$, if two reward machines are input-output equivalent over $\mathrm{Pref}(\mathcal{L}(\mathcal{M}_L))$, then they are policy-equivalent with respect to $\mathcal{M}_L$.  
\end{proposition}


\begin{figure}
    \centering
    \includegraphics[width= 0.6\linewidth]{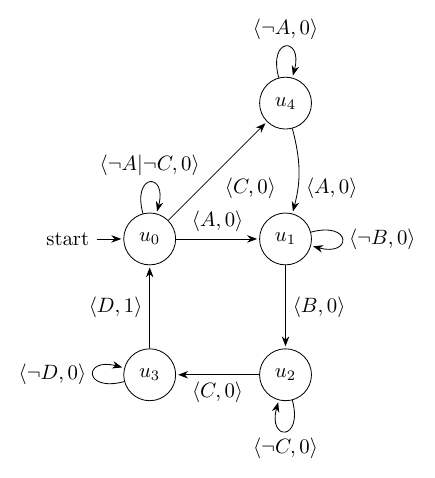}
    \caption{An equivalent patrol task reward machine.}
    \label{fig:rooms_rm_alt}
\end{figure}
In our running example, the reward machine shown in \figureref{fig:rooms_rm_alt} induces the same $\ptp$ as the reward machine of \figureref{fig:patrol_rm}. Indeed, they are input-output equivalent.

Furthermore,  it follows from the literature on learning policy-equivalent rewards \citep{shehab2024learning,cao2021identifiability} that two reward machines whose rewards induce the same product policy, are again policy-equivalent according to our definition. \ML{For example, consider two reward machines $\mathcal{R}_1,\mathcal{R}_2$ such that $\delta_{\mathbf{r}}^1 = \delta_{\mathbf{r}}^2 + c$, where $c \in \mathbb{R}$. While not being input-output equivalent, these two reward machines induce the same PTP. Hence, methods that only look at input-output equivalence will fail to identify their policy-equivalence, potentially treating them as distinct reward machines. This oversight can lead to redundant representations or inefficient learning processes, while for implementation purposes, they should be treated equally.}

\section{Problem Statement}\label{sec:prob_stat}

We present our formal problem statement.
Consider a labeled MDP $\mathcal{M}_L$ and a prefix tree policy $\ptp^{\mathrm{true}}$ induced by an optimal solution of problem~\ref{eq:opt_prob}. Given $\mathcal{M}_L$, the depth-$l$ restriction of $\ptp^{\mathrm{true}}$, and an upper bound $u_{\mathrm{max}}$ on the number of nodes of the underlying reward machine, find a minimal reward machine model $\mathcal{G}$, with $|\mathcal{U}| \leq u_{\mathrm{max}}$, that is consistent with the depth-$l$ restriction of $\ptp^{\mathrm{true}}$. Moreover, if $l$ is sufficiently large, find a reward machine $\mathcal{G}$, with $|\mathcal{U}| \leq u_{\mathrm{max}}$, that is consistent with $\ptp^{\mathrm{true}}$.


\AA{We present our formal problem statement. Given a labeled MDP model $\mathcal{M}_L$ and the depth-$l$ restriction $\ptp^{\mathrm{true},l}$ of a prefix tree policy $\ptp^{\mathrm{true}}$ induced by a reward machine with at most $u_{\mathrm{max}}$ nodes, find a minimal reward machine that induces a prefix tree policy $\ptp^{\mathrm{learned}}$ with the same depth-$l$ restriction, i.e., $\ptp^{\mathrm{learned},l}=\ptp^{\mathrm{true},l}$.

Additionally, find a depth-$l$, if it exists, such that the true and learned (unrestricted) prefix tree policies are identical, i.e., $\ptp^{\mathrm{learned}}=\ptp^{\mathrm{true}}$.
}
\fi

\section{Methodology}
\subsection{SAT Encoding}\label{sec:sat_enc}

We encode the RM learning problem into a Boolean Satisfiability problem (SAT). \pr{SAT is the problem of determining whether there exists an assignment to variables of a given Boolean formula that makes it evaluate to true. While SAT is known to be NP-complete \citep{cook1971complexity}, there are several powerful off-the-shelf solvers capable of solving large practical instances \citep{biere2009handbook}.} Specifically, we use SAT to encode a graph with $n \leq u_{\mathrm{max}}$ nodes and associate a Boolean variable with each edge in the graph. Each node has $|\mathrm{AP}|$ outgoing edges. 
We define the Boolean variables $\{b_{ikj} \mid 1 \leq i ,j \leq n, 1 \leq k \leq |\mathrm{AP}|\}$ as:
\begin{equation}
    b_{ikj} = \begin{cases} 1 \quad \text{if } i\overset{k}{\to} j, \\ 0 \quad \text{Otherwise,} \end{cases}
\end{equation}
where we use the shorthand $i\overset{k}{\to} j$ to denote that proposition $k$ transitions node $i$ to node $j$, i.e., $\delta_{\mathbf{u}}(u_i,k)=u_j$. We can encode several properties of the RM into Boolean constraints. Without loss of generality, we set node $1$ of the graph to be $u_I$. To make the derivation easier, we define for each atomic proposition $k$ an adjacency matrix $B_k$ with $(B_k)_{ij} = b_{ikj}$. 
The Boolean constraints we add are due to \emph{determinism}, \emph{full-specification}, \emph{negative examples} and \emph{non-stuttering} of the learned reward machine. We expand on each of them below.

\paragraph{Determinism:}\label{sec:det}
Due to the RM being a deterministic machine, each label can only transition to one node. The corresponding Boolean constraints are:
\begin{equation}\label{eq:determinism}
    \forall i,k,j, \forall j' \neq j \quad b_{ikj} = 1 \implies b_{ikj'} = 0 .
\end{equation}

\paragraph{Full Specification:}\label{sec:fs}
This constraint, also known as being input-enabled \citep{hungar2003domain}, ensures that all labels generate valid transitions at all nodes. The corresponding Boolean constraints are: 
\begin{equation}\label{eq:full}
    \forall i, \forall k, \exists j \text{ such that } b_{ikj} = 1.
\end{equation}
We can combine the conditions of Sections \ref{sec:det} and \ref{sec:fs} into one condition on each $B_k$ enforcing that each row has exactly one entry with value $1$.


\paragraph{Negative Examples:}\label{sec:opt}
Our Boolean constraint here depends on the following result.
\begin{lemma} \label{lem:neg}
    Let $\sigma, \sigma' \in (\mathrm{AP})^*$ be two finite label sequences. If $\ptp^{\mathrm{true}}(a|s, \sigma) \neq \ptp^{\mathrm{true}}(a|s, \sigma')$, then  $\delta_\textbf{u}^*(u_I,\sigma) \neq \delta_\textbf{u}^*(u_I,\sigma')$. 
\end{lemma}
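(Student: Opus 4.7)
The plan is to prove the lemma by contrapositive, showing that if two label sequences reach the same reward machine node, then they must induce the same action distribution at any common MDP state under the prefix tree policy. This reduces the statement to a direct consequence of the defining relation~\eqref{eq:induced} between the product policy and the PTP.

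First, I would suppose for contradiction (or equivalently, for the contrapositive) that $\delta_{\mathbf{u}}^*(u_I,\sigma) = \delta_{\mathbf{u}}^*(u_I,\sigma') =: u$. Second, I would apply~\eqref{eq:induced} to both $\sigma$ and $\sigma'$, obtaining
\begin{equation*}
\ptp^{\mathrm{true}}(a|s,\sigma) = \pi_{\mathrm{Prod}}^*(a|s,\delta_{\mathbf{u}}^*(u_I,\sigma)) = \pi_{\mathrm{Prod}}^*(a|s,u) = \pi_{\mathrm{Prod}}^*(a|s,\delta_{\mathbf{u}}^*(u_I,\sigma')) = \ptp^{\mathrm{true}}(a|s,\sigma'),
\end{equation*}
which contradicts the hypothesis $\ptp^{\mathrm{true}}(a|s,\sigma) \neq \ptp^{\mathrm{true}}(a|s,\sigma')$. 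Third, I would note that the application of~\eqref{eq:induced} is legitimate precisely when $(s,\sigma)$ and $(s,\sigma')$ belong to $\mathrm{Dom}_{\mathrm{PTP}}$, which is implicit in the hypothesis that $\ptp^{\mathrm{true}}$ evaluates to well-defined (distinct) distributions on both arguments.

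There is essentially no obstacle here: the lemma is a clean restatement of the fact that the product optimal policy depends on the label history $\sigma$ only through the reward machine node $u=\delta_{\mathbf{u}}^*(u_I,\sigma)$ that this history induces. The only subtle point worth flagging in the write-up is that the optimal policy $\pi^*_{\mathrm{Prod}}$ is a function of the product state $(s,u)$ alone (not of the full history), which is what allows collapsing $\pi_{\mathrm{Prod}}^*(a|s,\delta_{\mathbf{u}}^*(u_I,\sigma))$ and $\pi_{\mathrm{Prod}}^*(a|s,\delta_{\mathbf{u}}^*(u_I,\sigma'))$ once the two RM nodes coincide; this is guaranteed by the Markovian nature of the product MDP and the stationarity of the MaxEnt optimal policy.
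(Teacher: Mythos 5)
Your proof is correct and takes essentially the same route as the paper, which simply notes that the lemma follows from~\eqref{eq:induced}; you have just spelled out the contrapositive argument that the paper leaves implicit.
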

\begin{proof}
    It follows from \eqref{eq:induced}.
\end{proof}

Based on this result, given the depth-$l$ restriction $\pi_{\mathrm{PTP}}^l$ of a PTP $\ptp$, we construct the set of negative examples as:
\begin{align}\label{eq:def_neg_examples}
\mathcal{E}^-_l = \{\{\sigma, \sigma'\} \mid \pi_{\mathrm{PTP}}^l(a|s, \sigma) \neq \pi_{\mathrm{PTP}}^l(a|s, \sigma') \text{ for some } s,a \}.
\end{align}

Let $ \sigma = k_1 k_2 \cdots k_l$ and $\sigma' = k_1'k_2' \cdots k_m'$ be two propositional prefixes that lead to different policies in the same state, therefore $\{\sigma,\sigma'\}\in \mathcal{E}^-_l$. We encode the condition given by Lemma~\ref{lem:neg} into Boolean constraints as:
\begin{align}\label{eq:ce_bool}
    (B_{k_l}^\intercal B_{k_{l-1}}^\intercal \cdots B_{k_1}^\intercal e_1)\bigwedge (B_{k_m'}^\intercal B_{k_{m-1}'}^\intercal \cdots B_{k_1'}^\intercal e_1) = &\begin{bmatrix}0 &0 & \cdots & 0\end{bmatrix}^\intercal,
\end{align}
where $\bigwedge$ is the element-wise \textbf{AND} operator and $e_1 \triangleq [1,0,\cdots,0]^\intercal$ indicates that the paths start from the initial node. Our algorithm adds the Boolean constraint given in \eqref{eq:ce_bool} for each element of $\mathcal{E}^- $. The significance of encoding negative examples is that it eliminates the learning of trivial reward machines. In particular, our method never learns a trivial one-state reward machine with all self-transitions \citep{icarte2023learning} as long as there is at least one negative example in our prefix tree policy. 

\paragraph{Non-Stuttering:}\label{sec:conn}
A reward machine is said to be non-stuttering if when a proposition transitions into a node, that same proposition can not transition out of the node, i.e., for all $(u,a,u')\in\mathcal{U}\times\mathrm{AP}\times\mathcal{U}$: $ \delta_{\mathbf{u}}(u, a) = u' \implies \delta_{\mathbf{u}}(u',a) = u'$. This is related to multi-stage tasks where the particular duration spent on a subtask (i.e., satisfying a given atomic proposition) is not important \citep{baier2008principles}. When the reward machine is \textit{a priori} known to be non-stuttering, this extra condition can be included in the SAT problem. The main significance of this condition is trace-compression \citep{icarte2023learning}, by which we can reduce the number of negative examples in $\mathcal{E}^-$ by only keeping the shortest negative examples among the equivalent ones. In this case, two negative examples are equivalent if between two pairs, the corresponding label sequences differ only by the same proposition repeated consecutively more than one time. 
We encode non-stuttering into Boolean constraints as follows:
\begin{equation} \label{eq:non_stutter}
    \forall i,j,k:\ b_{ikj} = 1 \implies b_{jkj} = 1.
\end{equation}
The utility of this constraint is demonstrated empirically in Section~\ref{sec:ex:gridWorld}.

\subsection{Algorithm}\label{sec:algo}

To learn a minimal reward machine from the depth-$l$ restriction of a prefix tree policy, we proceed as follows. 

We start with one node and increase the number $n$ of nodes until the following SAT problem is feasible when instantiated for a graph with $n$ nodes:
\begin{equation} \label{eq:sat}\tag{SAT}
\texttt{SAT}_n(\eqref{eq:determinism}, \eqref{eq:full}, \eqref{eq:non_stutter}, \text{ for all } \{\sigma,\sigma'\}\in \mathcal{E}^-  \eqref{eq:ce_bool}).
\end{equation}
By construction, this is guaranteed to be satisfiable for some $n\leq u_{\mathrm{max}}$, upon which a reward machine model $\mathcal{G}^{\mathrm{learned}}$ can be constructed from the satisfying assignment's $B_k$'s. Then, we compute the product MDP model $(\mathcal{M}/r)^{\mathrm{learned}}=\mathcal{G}^{\mathrm{learned}} \times \mathcal{M}_L$. The learned product policy is constructed as follows. For each length $l$ word $\sigma\in(\mathsf{AP})^l$ and for all $s \in \mathrm{Reach}(\sigma)$, we define
\begin{equation}\label{eq:const}
\pi_{\mathrm{Prod}}^{\mathrm{learned}}(a|s, \delta_\mathbf{u}^{\mathrm{learned},*}(u_I, \sigma))=\pi_{\mathrm{PTP}}^{\mathrm{true},l}(a|s,\sigma),
\end{equation}
where the transition function $\delta_\mathbf{u}^{\mathrm{learned},*}$ is the transition function of $\mathcal{G}^{\mathrm{learned}}$.

The last step is finding the numerical values of rewards that render the product policy $\pi_{\mathrm{Prod}}^{\mathrm{learned}}$ optimal for the product MDP model $(\mathcal{M}/r)^{\mathrm{learned}}$. This is a standard IRL problem without reward machines where the special structure of the rewards on the product can be represented as features. We solve this step using the method developed in \citep{shehab2024learning}. Featurization (see, \citep[Section~4]{shehab2024learning}) is used to enforce that the reward function of the product can be written as $r((s,u),a,(s',u'))=\delta_\mathbf{r}(u,L(s'))$, with $u'=\delta_\mathbf{u}(u,L(s'))$. This IRL method gives us the corresponding output function $\delta_\mathbf{r}$ of the reward machine. 
\begin{remark}
    Although we constrain the output function of the reward machine to be of the form $\delta_r:\mathcal{U}\times \mathrm{AP}\to \mathbb{R}$ (leading to what is commonly known as simple reward machines \citep{icarte2018using}), this does not limit the generality of our framework. The same procedure remains applicable in the case of a dense output function of the form $\delta_r: \mathcal{U} \times \mathrm{AP} \to [ \mathcal{S} \times \mathcal{A} \times \mathcal{S} \to \mathbb{R}]$. We focus on the simple output function formulation in this paper to simplify both the presentation and the interpretation of the resulting reward machines.
\end{remark}

The overall procedure is summarized in Algorithm~\ref{alg:main}.

\begin{algorithm}
\caption{Learning a Minimal Reward Machine from depth-$l$ Restriction of a Prefix Tree Policy}\label{alg:main}
\KwIn{Depth-$l$ prefix tree policy $\pi_{\mathrm{PTP}}^{\mathrm{true},l}$, labeled MDP $\mathcal{M}_L$.}
\KwOut{Learned reward machine $\mathcal{R}^{\mathrm{learned}}$}

$n \gets 1$

\While{\texttt{SAT}$_n$ is infeasible}{
    $n \gets n+1$
}

Construct $\mathcal{E}_l^-$ using \eqref{eq:def_neg_examples}

$\{B_k\}_{k=1}^{|\mathrm{AP}|} \gets$ \texttt{SAT}$_n$ solution

$\mathcal{G}^{\mathrm{learned}} \gets$ Construct\_RM\_model($\{B_k\}_{k=1}^{|\mathrm{AP}|}$)

$(\mathcal{M}/r)^{\mathrm{learned}} \gets \mathcal{G}^{\mathrm{learned}} \times \mathcal{M}_L$

\ForEach{$\sigma \in (\mathsf{AP})^l$}{
    \ForEach{$s \in \mathrm{Reach}(\sigma)$}{
        Define product policy:
        $\pi_{\mathrm{Prod}}^{\mathrm{learned}}(a|s, \delta_\mathbf{u}^{\mathrm{learned},*}(u_I, \sigma)) \gets \pi_{\mathrm{PTP}}^{\mathrm{true},l}(a|s,\sigma)$
    }
}

$\delta_\mathbf{r} \gets$ IRL\_to\_extract\_reward($\pi_{\mathrm{Prod}}^{\mathrm{learned}},(\mathcal{M}/r)^{\mathrm{learned}}$)



\Return{$\mathcal{R}^{\mathrm{learned}} = (\mathcal{G}^{\mathrm{learned}}, \delta_\mathbf{r})$}
\end{algorithm}

\begin{remark} It is important to emphasize that up to line 7 in Algorithm~\ref{alg:main}, neither the optimality of the prefix tree policy nor the transition kernel $\mathcal{P}$ of the MDP is used. That is, our method learns a reward machine model $\mathcal{G}^{\mathrm{learned}}$ in a model-free fashion as long as the prefix-tree policy is induced by a reward machine. \pr{This is further illustrated through an example in \appendixref{app:subopt}}. The optimality with respect to the MaxEnt objective in \eqref{eq:max_ent_obj} and the transition kernel only comes into play to extract the numerical reward values in lines 8-11.
\end{remark}

\subsection{Proof of Correctness}\label{sec:proof}

Let $\mathcal{G}^{\mathrm{learned}}$ be the reward machine model extracted from the SAT solution, with $\delta_\mathbf{u}^{\mathrm{learned}}$ being the associated transition function. The first property of our SAT solution is that it is consistent with any fixed depth-of the prefix tree policy. We formalize this in the result below.
\begin{proposition}\label{prop:const_k}
    Given the labeled MDP model $\mathcal{M}_L$, the depth-$l$ restriction $\ptp^{\mathrm{true},l}$ of the true prefix policy $\ptp^{\mathrm{true}}$, and an upper bound $u_{\mathrm{max}}$ on the number  of nodes of the underlying reward machine, let $\mathcal{G}^{\mathrm{learned}}$ be the output of our SAT problem, and define $\ptp^{\mathrm{learned}}$ to be the (infinite depth) prefix tree policy induced by $\mathcal{G}^{\mathrm{learned}}$. Then, the learned and the true prefix tree policies have the same depth-$l$ restriction, i.e., $\ptp^{\mathrm{learned},l}=\ptp^{\mathrm{true},l}$. 
\end{proposition}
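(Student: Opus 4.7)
The plan is to show that the negative-example constraints in the SAT problem are exactly what is needed to make the product policy defined in~\eqref{eq:const} a well-defined function of the product state, and then to unfold the induced-PTP definition~\eqref{eq:induced} to conclude. First, I would argue feasibility: the true reward machine model (projected onto at most $u_{\mathrm{max}}$ nodes) is itself a valid assignment of the $B_k$'s, because it is deterministic and fully specified by definition, and it separates every pair in $\mathcal{E}^-_l$ thanks to Lemma~\ref{lem:neg}. Hence the outer loop of Algorithm~\ref{alg:main} terminates with some $n\le u_{\mathrm{max}}$ and produces $\mathcal{G}^{\mathrm{learned}}$ with transition function $\delta_\mathbf{u}^{\mathrm{learned}}$.

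The central step is to verify that the assignment in~\eqref{eq:const} is consistent, i.e., does not depend on the particular prefix chosen to reach a given product state. Take any two prefixes $\sigma,\sigma'$ of length at most $l$ and any state $s\in \mathrm{Reach}(\sigma)\cap\mathrm{Reach}(\sigma')$ such that $\delta_\mathbf{u}^{\mathrm{learned},*}(u_I,\sigma)=\delta_\mathbf{u}^{\mathrm{learned},*}(u_I,\sigma')$. The Boolean encoding in~\eqref{eq:ce_bool} enforces that for every element of $\mathcal{E}^-_l$ the two corresponding path vectors have disjoint supports; equivalently, the learned RM maps the two prefixes to distinct nodes. Taking the contrapositive, since the learned RM maps $\sigma$ and $\sigma'$ to the \emph{same} node, the pair $\{\sigma,\sigma'\}$ cannot belong to $\mathcal{E}^-_l$, and therefore by the definition in~\eqref{eq:def_neg_examples} we must have $\ptp^{\mathrm{true},l}(a\mid s,\sigma)=\ptp^{\mathrm{true},l}(a\mid s,\sigma')$ for all $a$. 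Thus the right-hand side of~\eqref{eq:const} depends only on the product state $(s,u)$, so $\pi_{\mathrm{Prod}}^{\mathrm{learned}}$ is well-defined on its domain.

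Finally, I would combine this with the induced-PTP relation~\eqref{eq:induced}. For any $\sigma$ with $|\sigma|\le l$ and any $s\in\mathrm{Reach}(\sigma)$,
\begin{equation*}
\ptp^{\mathrm{learned}}(a\mid s,\sigma)
=\pi_{\mathrm{Prod}}^{\mathrm{learned}}\bigl(a\,\big|\,s,\delta_\mathbf{u}^{\mathrm{learned},*}(u_I,\sigma)\bigr)
=\ptp^{\mathrm{true},l}(a\mid s,\sigma),
\end{equation*}
where the first equality is~\eqref{eq:induced} applied to the learned model and the second is the defining identity~\eqref{eq:const}. Restricting to depth $l$ on both sides gives $\ptp^{\mathrm{learned},l}=\ptp^{\mathrm{true},l}$, which is the claim.

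The main obstacle is the well-definedness argument in the second paragraph: everything else is essentially unfolding of notation, but the consistency of~\eqref{eq:const} is precisely where the negative-example constraints~\eqref{eq:ce_bool} do their work, and care is needed to handle prefixes of length strictly less than $l$ (they can always be extended to length $l$, or equivalently one applies~\eqref{eq:def_neg_examples} directly, since negative examples are defined over prefixes of length up to $l$). Once this contrapositive form of Lemma~\ref{lem:neg} is in place, the conclusion is immediate.
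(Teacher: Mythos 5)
Your proof is correct and follows essentially the same route as the paper: the paper argues by contradiction that a depth-$l$ mismatch would produce a pair of prefixes reaching the same learned node while having different true PTP values, violating the negative-example constraints \eqref{eq:ce_bool}, which is exactly the contrapositive of your well-definedness argument for \eqref{eq:const}, followed by the same unfolding of \eqref{eq:induced}. The subtlety you flag about prefixes of length strictly less than $l$ is handled in the paper the same way (it reads \eqref{eq:const} over all prefixes of length at most $l$), so no genuine gap remains.
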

\begin{proof}
    See \appendixref{app:proof_const_k}.
\end{proof}

While Proposition~\ref{prop:const_k} represents a desirable property of our algorithm, being consistent with the depth-$l$ restriction of the true prefix tree policy is in general not sufficient to be consistent with the true (infinite-depth) prefix tree policy (as required in Problem~(P1) from \sectionref{sec:prob_stat}). This is potentially problematic if the agent demonstrates unseen changes in its policy for prefixes longer than $l$. At the same time, it is not possible to run our algorithm with the unrestricted prefix tree policy $\ptp^\mathrm{true}$ because it would lead to an infinite number of negative examples, i.e., $|\mathcal{E}^-_\infty|=\infty$. Consequently, the algorithm would not terminate. Fortunately, we can show that if $l$ is large enough, then increasing $l$ will not change the satisfying assignments of the SAT problem.


\begin{proposition} \label{prop:prop6} 
Given $\mathcal{M}_L$,  an upper bound $u_{\mathrm{max}}$ on the number of nodes of the underlying reward machine, and the depth-$l$ restriction $\ptp^l$ of some prefix tree policy $\ptp$, where $l = |\mathcal{S}|u_{\mathrm{max}}^2$. Then, $\{B_k\}_{k=1}^{|\mathrm{AP}|}$ is a satisfying assignment for $$\texttt{SAT}_{u_{\mathrm{max}}}(\eqref{eq:determinism}, \eqref{eq:full}, \eqref{eq:non_stutter}, \text{ for all } \{\sigma,\sigma'\}\in \mathcal{E}^{-}_l \eqref{eq:ce_bool})$$ 
if and only if it is a satisfying assignment for all $j\geq l$ for $$\texttt{SAT}_{u_{\mathrm{max}}}(\eqref{eq:determinism}, \eqref{eq:full}, \eqref{eq:non_stutter}, \text{ for all } \{\sigma,\sigma'\}\in \mathcal{E}^{-}_j \eqref{eq:ce_bool}).$$
\end{proposition}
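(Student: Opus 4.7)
The plan is to prove the two implications separately. The reverse direction is immediate: for $l\leq j$ we have $\mathcal{E}^-_l\subseteq \mathcal{E}^-_j$, and the clauses \eqref{eq:determinism}, \eqref{eq:full}, and \eqref{eq:non_stutter} do not depend on the depth, so any assignment that satisfies the depth-$j$ instance a fortiori satisfies the depth-$l$ instance. All the work is in the forward direction: assuming that $\{B_k\}$ satisfies the depth-$l$ SAT, I need to show that for every $j\geq l$ and every $\{\sigma,\sigma'\}\in \mathcal{E}^-_j$ the learned transition function $\delta_\mathbf{u}^{\mathrm{learned},*}$ encoded by $\{B_k\}$ separates $\sigma$ and $\sigma'$, i.e., $\delta_\mathbf{u}^{\mathrm{learned},*}(u_I,\sigma)\neq \delta_\mathbf{u}^{\mathrm{learned},*}(u_I,\sigma')$, which is precisely the content of \eqref{eq:ce_bool}.

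The approach is a pumping argument in the Cartesian product of the MDP, the underlying true RM, and the candidate RM of size $u_{\mathrm{max}}$ encoded by $\{B_k\}$. Fix any witness $(s,a)$ of $\ptp(a|s,\sigma)\neq \ptp(a|s,\sigma')$ and any concrete trajectory $\tau\in \mathcal{T}_s^{\mathrm{fin}}$ with $L(\tau)=\sigma$ and $\tau_{|\tau|}=s$. Along $\tau$ I track the augmented triple $(\tau_t,u_t^{\mathrm{true}},u_t^{\mathrm{learned}})\in \mathcal{S}\times \mathcal{U}^{\mathrm{true}}\times \mathcal{U}^{\mathrm{learned}}$, where $u_t^{\mathrm{true}}$ and $u_t^{\mathrm{learned}}$ denote the nodes reached by the true and the candidate RM after reading $\sigma_{:t+1}$. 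Since the true RM has at most $u_{\mathrm{max}}$ nodes and so does the candidate, this triple lives in a set of cardinality at most $|\mathcal{S}|u_{\mathrm{max}}^2=l$. Whenever $|\tau|>l$, pigeonhole yields $i<j$ at which the triple repeats; excising the loop $\tau_{i+1:j}$ produces a strictly shorter trajectory $\hat\tau$ which (i) remains feasible because $\tau_i=\tau_j$ so the action originally taken at time $j$ is available at $\tau_i$, (ii) still ends at $s$, and (iii) yields the same pair of terminal true/learned RM nodes as $\tau$, by determinism of the two transition functions starting from the matching states at the surgery point. Iterating gives $\hat\sigma:=L(\hat\tau)$ with $|\hat\sigma|\leq l$; the same construction applied to $\sigma'$ yields $\hat\sigma'$ of length at most $l$.

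To close the argument, note that $\{\hat\sigma,\hat\sigma'\}\in \mathcal{E}^-_l$: $s$ is reachable from both $\hat\sigma$ and $\hat\sigma'$ by construction, and the true RM lands in the same terminal nodes as for the unshortened sequences, so via \eqref{eq:induced} the witness $(s,a)$ still satisfies $\ptp(a|s,\hat\sigma)\neq \ptp(a|s,\hat\sigma')$. By the depth-$l$ hypothesis, $\{B_k\}$ separates $\hat\sigma$ from $\hat\sigma'$; since the candidate's terminal nodes on $\sigma$ and $\hat\sigma$ coincide (respectively on $\sigma'$ and $\hat\sigma'$), it also separates $\sigma$ from $\sigma'$, as required. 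The main obstacle will be the surgery step: verifying rigorously that after deleting the loop, both the MDP trajectory stays admissible and the two RM runs on $L(\hat\tau)$ reproduce the terminal nodes of the runs on $L(\tau)$. Both facts reduce to determinism—the MDP's Markovianity together with $\tau_i=\tau_j$ handles admissibility, and the deterministic transition functions $\delta_\mathbf{u}^{\mathrm{true}}$ and $\delta_\mathbf{u}^{\mathrm{learned}}$ guarantee that restarting from the matching nodes with the same suffix of labels produces identical runs.
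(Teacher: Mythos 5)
Your proposal is correct and follows essentially the same route as the paper: the triple $(\tau_t,u_t^{\mathrm{true}},u_t^{\mathrm{learned}})$ you track is exactly the paper's synchronized reward machine model $\mathcal{G}^{\mathrm{sync}}$ composed with $\mathcal{M}_L$, and your loop-excision/pigeonhole step with the bound $|\mathcal{S}|u_{\mathrm{max}}^2$ is the paper's cycle-removal argument, followed by the same use of the true RM's consistency with $\ptp$ to certify that the shortened pair is a depth-$l$ negative example. The only difference is presentational (you argue the forward direction directly for each depth-$j$ negative example, the paper argues by contradiction), which does not change the substance.
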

\begin{proof} See \appendixref{app:proof_prop6}.
\end{proof}

Now, we present the main result of this section. Our result guarantees that given a sufficiently deep restriction of the true prefix tree policy, our recovered reward machine will be consistent with true infinite depth  prefix tree policy. That is, our algorithm is guaranteed to find a reward machine that is policy-equivalent to the true reward machine.
\begin{theorem}\label{thm:thm_major}
{
    Given a labeled MDP model $\mathcal{M}_L$ and the depth-$l$ restriction $\ptp^{\mathrm{true},l}$ of a prefix tree policy induced by a reward machine $\mathcal{R}^{\mathrm{true}}$ with at most $u_{\mathrm{max}}$ nodes, if $l\geq|\mathcal{S}|u_{\mathrm{max}}^2$, then the reward machine $\mathcal{R}^{\mathrm{learned}}$ returned by Algorithm~\ref{alg:main} is policy-equivalent to $\mathcal{R}^{\mathrm{true}}$ with respect to $\mathcal{M}_L$.
    }
    
\end{theorem}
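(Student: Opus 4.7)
The plan is to combine Propositions~\ref{prop:const_k} and~\ref{prop:prop6} to lift the depth-$l$ consistency guarantee of the SAT-based construction to consistency at every depth, which is precisely what policy-equivalence requires. Concretely, suppose Algorithm~\ref{alg:main} is run with a depth-$l$ restriction where $l\geq|\mathcal{S}|u_{\mathrm{max}}^2$. It returns a satisfying assignment $\{B_k\}_{k=1}^{|\mathrm{AP}|}$ producing the reward machine model $\mathcal{G}^{\mathrm{learned}}$, and then IRL in lines 8--11 extracts rewards $\delta_\mathbf{r}$ rendering the constructed product policy $\pi_{\mathrm{Prod}}^{\mathrm{learned}}$ MaxEnt-optimal on $(\mathcal{M}/r)^{\mathrm{learned}}$. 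By Proposition~\ref{prop:const_k}, the prefix tree policy $\ptp^{\mathrm{learned}}$ induced by $\mathcal{R}^{\mathrm{learned}}=(\mathcal{G}^{\mathrm{learned}},\delta_\mathbf{r})$ agrees with $\ptp^{\mathrm{true}}$ on every prefix of length at most $l$.

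Next, I would extend this agreement beyond depth $l$. Proposition~\ref{prop:prop6} asserts that any assignment $\{B_k\}$ feasible for the depth-$l$ SAT problem is also feasible for the depth-$j$ SAT problem for every $j\geq l$. Thus, had Algorithm~\ref{alg:main} been run with the deeper restriction $\ptp^{\mathrm{true},j}$ instead of $\ptp^{\mathrm{true},l}$, the identical assignment $\{B_k\}$ would still be a valid SAT solution, producing the same $\mathcal{G}^{\mathrm{learned}}$. Moreover, since $l\geq|\mathcal{S}|u_{\mathrm{max}}^2$ is at least the total number of accessible product states, the construction in lines 9--11 already specifies $\pi_{\mathrm{Prod}}^{\mathrm{learned}}$ on all of $\mathrm{Dom_{Prod}}$, so extending to $\ptp^{\mathrm{true},j}$ prescribes no new values and the IRL step in line 11 returns the same $\delta_\mathbf{r}$. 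Applying Proposition~\ref{prop:const_k} at this hypothetical depth $j$ then yields $\ptp^{\mathrm{learned},j}=\ptp^{\mathrm{true},j}$.

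Since $j\geq l$ is arbitrary and the agreement at depths below $l$ is already given by the initial application of Proposition~\ref{prop:const_k}, I conclude $\ptp^{\mathrm{learned}}=\ptp^{\mathrm{true}}$ on all of $\mathrm{Dom_{PTP}}$. Both $\ptp^{\mathrm{learned}}$ and $\ptp^{\mathrm{true}}$ are, by construction, the PTPs induced by MaxEnt-optimal product policies of $\mathcal{R}^{\mathrm{learned}}$ and $\mathcal{R}^{\mathrm{true}}$ respectively, so the definition of policy-equivalence from Section~\ref{sec:prod_mdp} is satisfied.

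The step I anticipate being most delicate is the invariance of the IRL output under the extension from depth $l$ to depth $j$: one must verify that increasing the depth does not introduce fresh $(s,u)$ pairs into $\mathrm{Dom_{Prod}}$ on which $\pi_{\mathrm{Prod}}^{\mathrm{learned}}$ would need to be reassigned, and that the featurized IRL subroutine of \citep{shehab2024learning} is deterministic given the product policy and product MDP model. The first part rests on the choice $l\geq|\mathcal{S}|u_{\mathrm{max}}^2$, which by a standard pumping-style counting argument (the same one implicit in Proposition~\ref{prop:prop6}) exhausts all reachable product states; the second is essentially a restatement of the correctness of the IRL procedure already developed in prior work. Once these are in place, the overall chain--$\{B_k\}$ invariant across depths via Proposition~\ref{prop:prop6}, then $\ptp^{\mathrm{learned},j}=\ptp^{\mathrm{true},j}$ via Proposition~\ref{prop:const_k}--goes through cleanly.
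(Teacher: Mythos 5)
Your proposal is correct and follows essentially the same route as the paper: Proposition~\ref{prop:prop6} lifts the SAT feasibility of $\{B_k\}$ from depth $l$ to every depth $j\geq l$, and Proposition~\ref{prop:const_k} applied at each such $j$ gives $\ptp^{\mathrm{learned},j}=\ptp^{\mathrm{true},j}$, hence full agreement of the prefix tree policies and policy-equivalence. The additional care you take about the invariance of the product-policy construction and IRL step under deepening is a reasonable elaboration of details the paper's proof leaves implicit, not a different argument.
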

\begin{proof}
   Follows immediately from Propositions~\ref{prop:const_k} and ~\ref{prop:prop6}. In particular, we know that $\mathcal{G}^{\mathrm{learned}}$ is a solution of $\texttt{SAT}_{u_\mathrm{max},j}$, for all $j \geq l$, due to Proposition~\ref{prop:prop6}. Combined with Proposition~\ref{prop:const_k}, this means that $\ptp^{\mathrm{learned},j} =\ptp^{\mathrm{true},j}$, for all $j \geq l$. 
\end{proof}
\begin{remark}
Note that, in practice, a depth-$l$ restriction where  $l\ll|\mathcal{S}|u_{\mathrm{max}}^2$ can be sufficient to find a reward machine that is policy equivalent to the true one if all the solutions of the corresponding SAT problem are policy-equivalent to each other (e.g., they correspond to the same reward machine up to renaming of nodes). This will be further illustrated in the experiments of \sectionref{sec:ex:gridWorld}.
\end{remark}

\section{Learning From Demonstrations}\label{sec:lfd}
In this section, we present how our method can be applied when the optimal policy is known only through a set of optimal demonstrations $\mathfrak{D} = \{(\sigma_i,s_i,a_i)\}_{i=1}^{|\mathfrak{D}|}$. Hence, the depth-$l$ restriction of the true prefix tree policy is unknown. Consequently, we construct the following consistent unbiased estimate:
\begin{equation}\label{eq:empiricalPolicy}
    \hptp(a|s,\sigma) =  \frac{\sum_{i=1}^{|\mathfrak{D}|}\mathbb{I}(\sigma_i = \sigma,s_i=s,a_i = a)}{\sum_{i=1}^{|\mathfrak{D}|}\mathbb{I}(\sigma_i = \sigma,s_i=s)}.
\end{equation}

However, the fact that we only have access to an approximation of $\ptp$ leads to two challenges. First, using $\hptp$ to directly construct the set of negative examples $\mathcal{E}^-_l$ will lead to many pairs $\{\sigma,\sigma'\}$ being incorrectly considered as negative examples (see equation~\eqref{eq:def_neg_examples}). Second, the SAT problem is not robust to incorrect negative examples. In the next subsections, we address these two problems.


\subsection{Estimating Negative Examples}\label{sec:ne_fse}
To limit the inclusion of incorrect negative examples in $\mathcal{E}^-_l$, a pair of prefixes $\{\sigma_1,\sigma_2\}$ will be considered as a negative example only if we have high confidence that they correspond to negative examples given the true prefix tree policy. In particular, let $\sigma_1,\sigma_2$ be two prefixes from $\mathfrak{D}$ that we want to compare and let $s\in\mathcal{S}$ be a state. For $j\in\{1,2\}$, let $n_j$ be the number of visitations to the pair $(s,\sigma_j)$, i.e., the denominator in \eqref{eq:empiricalPolicy}. Note that $\ptp(a|s,\sigma_j)$ is a categorical distribution of which we have a sample estimate for. 
Hence, it follows from \citep{weissman2003inequalities} that for all $\epsilon >0$:
\begin{align}\label{eq:confidenceInterval}
    \mathbb{P}(\|\hptp(a|s,\sigma_j) - \ptp(a|s,\sigma_j)\|_1 \leq \epsilon) &\geq 1 - \delta_j,
\end{align}
with $\delta_j\triangleq (2^{|\mathcal{A}|}-2)e^{-\frac{1}{2}n_j\epsilon^2}$. We pick $\epsilon = \frac{1}{2}\|\hptp(a|s,\sigma_1) - \hptp(a|s,\sigma_2)\|_1$. Using a union bound, the probability of the two confidence intervals described in \eqref{eq:confidenceInterval} (for $j\in\{1,2\}$) to overlap is lower bounded by $1-\delta_1-\delta_2$. Consequently, we consider the pair $\{\sigma_1,\sigma_2\}$ as being a negative example if there is a state $s$ such that $1-\delta_1-\delta_2\geq1-\alpha$, where $\alpha$ is a user-defined parameter.

\subsection{MAX-SAT: SAT with Robustness to Incorrect Negative Examples}\label{sec:max_sat}

Even when the set of negative examples is constructed from pairs of prefixes for which we have a high confidence of being true negative examples, some pairs may still be mislabeled as negative examples. We refer to those as false positives. To deal with these false positives, we implement a weighted MAX-SAT \citep{manquinho2009algorithms, biere2009handbook} variant of the SAT problem. Concretely, the weighted MAX-SAT problem consists of finding a Boolean assignment of the variables $b_{ikj}$ such that (i) constraints \eqref{eq:determinism}, \eqref{eq:full} and \eqref{eq:non_stutter} hold, i.e., the recovered RM is deterministic, non‑stuttering, and fully specified; and (ii) the constraint \eqref{eq:ce_bool} holds for a maximum number of pairs $\{\sigma,\sigma'\}$ in $\mathcal{E}^-_l$. In other words, \eqref{eq:determinism}, \eqref{eq:full} and \eqref{eq:non_stutter} are hard constraints whereas the constraints coming from \eqref{eq:ce_bool} are soft constraints \pr{with equal unitary weights}. This approach allows for robustness to misclassifications of negative examples. We empirically demonstrate in Section~\ref{sec:2d-arm} that this weighted MAX-SAT formulation consistently infers the smallest set of consistent reward machine models, even in the presence of false positives.




\section{Experiments}
 
To demonstrate the generality and efficiency of our approach, we apply it to a diverse set of domains, from classical grid‐based MDPs to a continuous robotic control task and a real‐world biological navigation dataset.  In every experiment, we fix the discount factor to $\gamma=0.99$ and the regularization weight to $\lambda=1.0$ when solving Problem~\eqref{eq:opt_prob}, both for generating demonstration traces and for reward recovery.  Our code is implemented in Python, and the Z3 library \citep{de2008z3} is used for solving the SAT and weighted MAX-SAT problems. To enumerate all the satisfying assignments of the SAT problem, we add a constraint that the next solution should be different every time our SAT solver finds a solution. Our implementation code is made publicly available here: \href{https://tinyurl.com/59smvbs6}{https://github.com/mlshehab/learning\_reward\_machines.git}.

\subsection{Tabular GridWorld MDPs}\label{sec:ex:gridWorld}

Our first experiment (\texttt{patrol}) is on the running-example patrol task of Figure~\ref{fig:overall}. By setting $u_{\mathrm{max}} = 4$, our bound from Theorem 1 is $256$. However, with only a depth-$6$ prefix tree policy, we end up with a total of $\mathbf{6}$ solutions. These are all the possible renamings of the true reward machine (see Figure~\ref{fig:patrol_rm}), meaning that the true reward machine is learned up-to-renaming with a depth-$6$ prefix tree policy. Table~\ref{tab:stat_exp1} summarizes the findings, where we also show how the non-stuttering condition of \sectionref{sec:conn} helps reduce the size of the negative example set, yet still recovering the true reward machine model. While some reduction in the SAT solver time is achieved, the drastic gain is in the time required to encode all the negative examples into the SAT solver, making the overall procedure orders of magnitude faster. For the remaining experiments, non-stuttering is assumed.

\begin{table}[h]
    \centering
    \begin{tabular}{|c|c|c|c|c|c|}
    \hline
       Task & Depth & $|\mathcal{S}|$& $|\mathcal{A}|$ & $|\mathcal{E}^-|$ & SAT time (sec) \\ \hline 
        \texttt{patrol} with \eqref{eq:non_stutter} &  6 & 16 & 4& 3076 & 0.51  \\ \hline  
        \texttt{patrol} without \eqref{eq:non_stutter} &  6 & 16 & 4& 30573 & 1.73  \\ \hline  
        \texttt{patrol-hallway} &  9 & 9 & 4&241435 & 2.859  \\ \hline
    \end{tabular}
    \caption{Solution statistics for the tabular GridWorld MDP.}
    \label{tab:stat_exp1}
\end{table}

For our second experiment (\texttt{patrol-hallway}), we add a hallway between the rooms, as shown in Figure~\ref{fig:hallway_patrol}. This is a $3\times3$ grid world, where the corresponding label of each room is shown. The reward machine is kept the same, and the atomic proposition $\mathrm{H}$ is added as a self-loop to all the nodes. With this added hallway, longer atomic proposition prefixes are required to reach all nodes of the reward machine, showing how the underlying MDP affects the required depth for learning a reward machine. For example, the shortest atomic proposition that can reach $u_3$ is now $\sigma = \mathrm{AHBHC}$ instead of $\mathrm{ABC}$ as in the previous example. With a depth-$9$ prefix tree policy, the reward machine is again learned up-to-renaming. \pr{Additional experiments studying transferability and the sensitivity to the upper bound $u_{\mathrm{max}}$ can be found in \cref{app:sens,app:transfer}. }

\subsection{Tabular BlockWorld MDPs}\label{sec:bwmdp}
The setup for this experiment is a modified block world problem \citep{khodeir2023learning,wolfe2006decision}. There are three blocks colored green, yellow and red, as well as 3 piles. Each stacking configuration of the blocks is a state of the MDP, and the action space consists of selecting a pile to pick from and a pile to place onto. We can only grab the top block of any stack. 
Action outcomes are assumed deterministic. The goal in the first task (\texttt{stack}) is to stack the blocks in the ordered stacking configurations $\mathbf{st_1}, \mathbf{st_2},\mathbf{st_3}$, shown in Figure~\ref{fig:stacking_mdp}. All other states have the label $\mathbf{i}$, denoting intermediate states. The corresponding reward machine is shown in Figure~\ref{fig:stacking_rm}. If the robot stacks the blocks in the order $\mathbf{st_1} \to \mathbf{st_2} \to \mathbf{st_3}$, it gets a reward of $1.0$. With a depth-$10$ prefix tree policy and $u_{\mathrm{max}} = 3$, our algorithm recovers $\mathbf{2}$ consistent reward machines, which are the true reward machine up-to-renaming. The findings are summarized in Table~\ref{tab:stat_blockworld}. Results obtained from a finite set of demonstrations are reported in Appendix~\ref{app:st}.

\begin{figure}[h!]
    \centering
    \begin{subfigure}[t]{0.6\columnwidth}
        \centering
        \includegraphics[width=\textwidth]{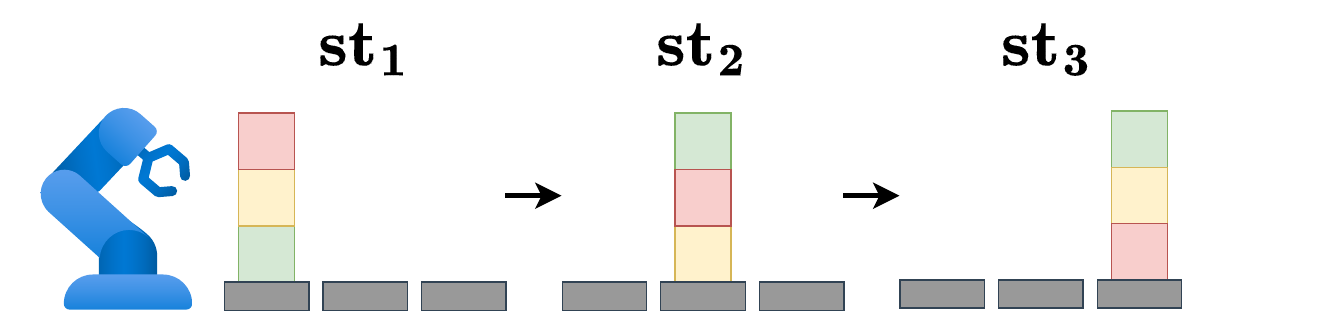}
        \caption{}
        \label{fig:stacking_mdp}
    \end{subfigure}
    \begin{subfigure}[t]{0.35\columnwidth}
        \centering
        \includegraphics[width=\textwidth]{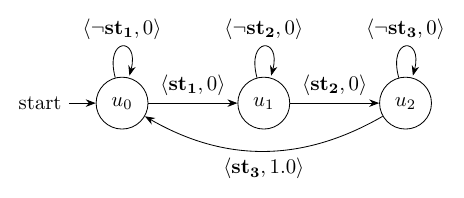}
        \caption{}
        \label{fig:stacking_rm}
    \end{subfigure}
    \caption{(a): Block World MDP. The left-most stacking configuration has label $\mathbf{st_1}$, where all blocks are stacked on the first pile with green being under yellow and yellow being under red. Similarly, the middle configuration has label $\mathbf{st_2}$ and the right-most configuration has label $\mathbf{st_3}$. (b): Stacking Reward Machine.}
    \label{fig:block_world_env}
    \vspace{-0.3cm}
\end{figure}



For our second task (\texttt{stack-avoid}), we introduce a ``bad'' state, shown in Figure~\ref{fig:badstate}. The true reward machine is shown in Figure~\ref{fig:bad_state_rm}.  The robot's task is to stack the blocks in the order $\mathbf{st_1} \to\mathbf{st_2}$ without going through ${\color{blue}\mathbf{st_{bd}}}$. If it does so, it reaches $u_2$ and gets a reward of $1$ forever. If during execution it passes through ${\color{blue}\mathbf{st_{bd}}}$, it will get a smaller (yet more immediate) reward of $0.2$, but it will get stuck at $u_3$ with $0$ reward forever. We note that the product policy is uniformly random in both $u_2$ and $u_3$. This means that proposition traces such as $\mathbf{st_1, i, st_2}$ and $\mathbf{st_1, i, {\color{blue}\mathbf{st_{bd}}}, st_2}$ look identical from a policy perspective, as both reach nodes with uniformly random policies, while the first being more desirable than the second. By setting $u_{\mathrm{max}} = 3$, a depth-$8$ policy was sufficient to recover the reward machine shown in Figure~\ref{fig:minimal} up-to-renaming; that is, we find a smaller reward machine consistent for this task. Notably, setting $u_{\mathrm{max}} = 4$ with the same depth policy yields more than a 1000 solutions, indicating that uniquely recovering the same sized reward machine -as the ground truth machine- requires longer depth policies.  Results obtained from a finite set of demonstrations are reported in Appendix~\ref{app:st_av}.

\begin{figure}[h!]
    \centering
    \begin{subfigure}[t]{0.28\columnwidth}
        \centering
        \includegraphics[width=\textwidth]{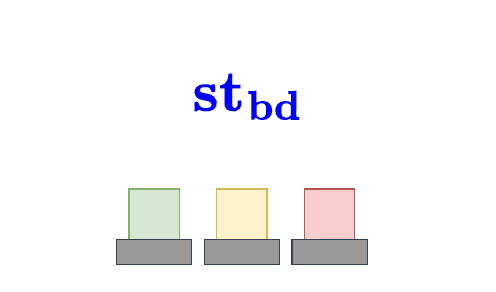}
        \caption{}
        \label{fig:badstate}
    \end{subfigure}
    \begin{subfigure}[t]{0.3\columnwidth}
        \centering
        \includegraphics[width=\textwidth]{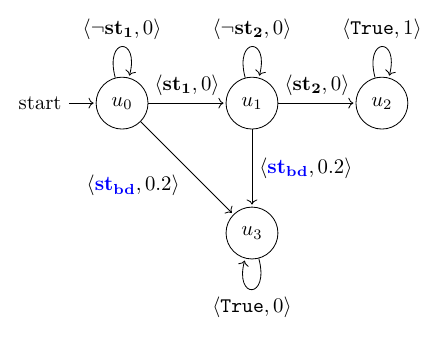}
        \caption{}
        \label{fig:bad_state_rm}
    \end{subfigure}
    \begin{subfigure}[t]{0.4\columnwidth}
        \centering
        \includegraphics[width=\textwidth]{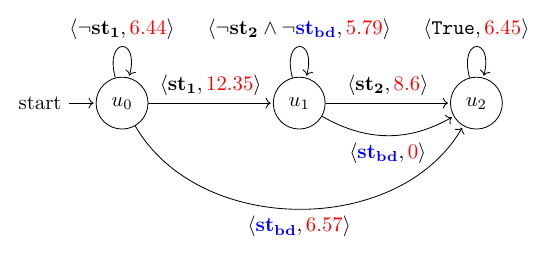}
        \caption{}
        \label{fig:minimal}
    \end{subfigure}
    \caption{(a) The block stacking configuration with label ${\color{blue}\mathbf{st_{bd}}}$ that we want our robot to avoid. (b) The ground truth reward machine. (c) Smaller consistent reward machine with the task \texttt{stack-avoid}.}
    \label{fig:mainfigure}
    \vspace{-0.3cm}
\end{figure}


\begin{table}[h]
    \centering
    \begin{tabular}{|c|c|c|c|c|c|}
    \hline
       Task & Depth & $|\mathcal{S}|$& $|\mathcal{A}|$ & $|\mathcal{E}^-|$ & SAT time (sec) \\ \hline 
        \texttt{stack} &  10 & 60 & 9&73548 & 0.612 \\ \hline
        \texttt{stack-avoid} &  8 & 60 & 9&24763 & 0.569  \\ \hline
    \end{tabular}
    \caption{Solution statistics for the tabular BlockWorld MDP.}
    \label{tab:stat_blockworld}
\end{table}

\subsection{2-Link Robotic Arm with Continuous State Space} \label{sec:2d-arm}
Our setup for this experiment is a modified \texttt{Reacher-v5} environment (Gymnasium~\citep{towers2024gymnasium}; MuJoCo~\citep{todorov2012mujoco}), where a planar robotic arm must reach targets randomly placed within the arena. The agent’s state is a 10‑D continuous vector consisting of the end‑effector’s position and velocity, the target’s coordinates, and their mutual distance. Actions are continuous torques in $[-1,1]^{2}$ applied at the elbow and shoulder joints.

Inspired by \citep{araki2021logical}, we fix three colored targets — blue ($\mathbf{b}$), red ($\mathbf{r}$), and yellow ($\mathbf{y}$) — in distinct quadrants of the arena (see Figure~\ref{fig:reacher_env}). The goal is to reach them in the order $\mathbf{b}\to\mathbf{y}\to\mathbf{r}$. To train an expert policy, we discretize each torque dimension into five levels $\{-1.0,-0.5,0.0,0.5,1.0\}$ (yielding 25 total actions). We then employ Proximal Policy Optimization (PPO)~\citep{schulman2017proximal}, as implemented in Stable‑Baselines3~\citep{raffin2021stable}, to maximize a reward given by the negative Euclidean distance between the end‑effector and the active target. 

When generating trajectories, we switch the desired target immediately upon reaching the previous target, thereby emulating a reward machine without introducing the machine during training or simulation. Appendix~\ref{sec:extra_2dreacher} provides full discretization and training details. From this procedure we collect $1M$ trajectories.

Table~\ref{tab:summ_2d_con} reports, for various probability thresholds $\alpha$ (defined in Section~\ref{sec:max_sat}), the size of the negative‐example set $\mathcal{E}^-$ and its False Positive Rate (FPR)\footnote{FPR is simply the number of false positives divided by $|\mathcal{E}^-|$.}. As expected from Section~\ref{sec:lfd}, solving our SAT problem with all the negative examples yields $\mathbf{0}$ solutions. Instead, we employ our weighted MAX-SAT variant and recover two viable solutions (Figures~\ref{fig:reward_2d_arm_1} and~\ref{fig:reward_2d_arm_2}) up-to-renaming, which represent the best achievable given our dataset. These are the same solutions that we recover if instead we supervised with the ground-truth reward machine to remove the false positives, emphasizing the robustness of this approach.

\begin{figure}[h!]
    \centering
    \begin{subfigure}[t]{0.25\columnwidth}
        \centering
        \includegraphics[width=\textwidth]{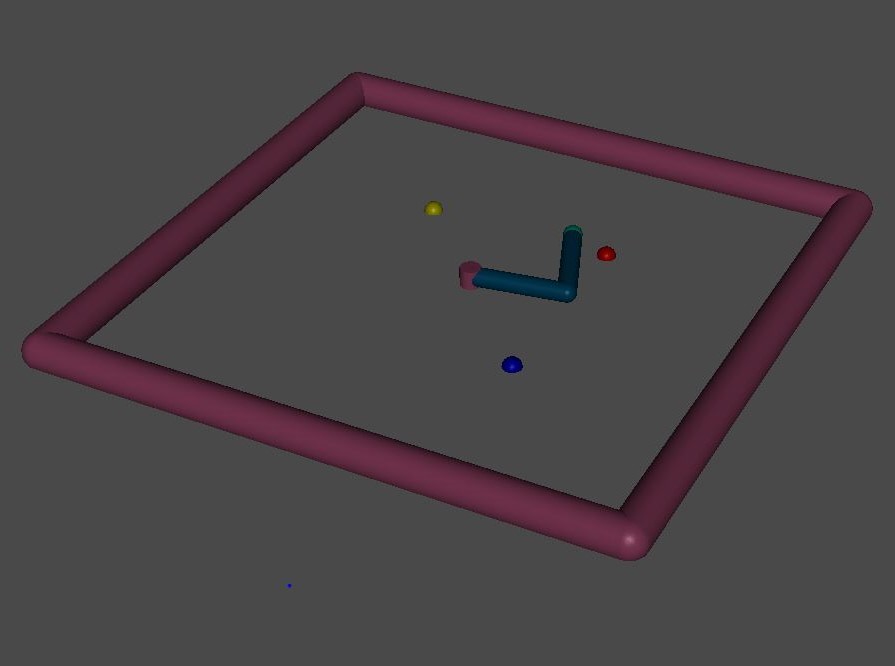}
        \caption{}
        \label{fig:arena}
    \end{subfigure}
    \begin{subfigure}[t]{0.33\columnwidth}
        \centering
        \includegraphics[width=\textwidth]{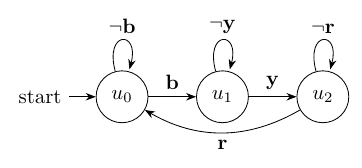}
        \caption{}
        \label{fig:reward_2d_arm_1}
    \end{subfigure}
    \begin{subfigure}[t]{0.33\columnwidth}
        \centering
        \includegraphics[width=\textwidth]{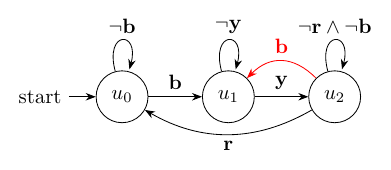}
        \caption{}
        \label{fig:reward_2d_arm_2}
    \end{subfigure}
    \caption{ Reacher experiment. (a): 2-link robotic arm with the three colored targets. (b) First recovered reward machine model. (c) Second recovered reward machine model.}
    \label{fig:reacher_env}
    \vspace{-0.3cm}
\end{figure}

\begin{table}[h!]
    \centering
    \begin{tabular}{|c|c|c|c|c|c|c|c|}
    \hline
       $|\mathfrak{D}|$  & $|\mathcal{S}|$ & $|\mathcal{A}|$  &$|\tau|$ & $|\mathcal{E}^-|$ & $\alpha $& FPR &\#  Weighted MAX-SAT solutions  \\ \hline
         1M & 17.4M& 25&160& 1472 & 0.001 &  1.90\% &  4 \\ \hline
         1M & 17.4M& 25&160& 1193 & 0.0001 & 1.67\% &  4 \\ \hline
         1M & 17.4M& 25& 160& 882 & 0.00001 & 1.36\%  &  4 \\ \hline
    \end{tabular}
    \caption{Summary of the experiments for the 2D-continuous reacher arm.}
    \label{tab:summ_2d_con}
\end{table}

\subsection{Real-world Mice Navigation}
We also applied our learning framework to the trajectories of real mice navigating (in the dark) in a 127-node labyrinth maze \citep{rosenberg2021mice} shown in Figure~\ref{fig:labyritnh}. Each node, labeled with a number in Figure~\ref{fig:maze}, represents a state in the MDP. The mouse can select from 4 actions: $\{\mathrm{stay },  \mathrm{right},\mathrm{left},\mathrm{reverse}\}$. State $116$ (middle-right) contains a water resource and is labeled $\mathbf{w}$, state $0$ (center) is the home state labeled $\mathbf{h}$, and all other states are labeled $\mathbf{i}$. Two cohorts of 10 mice moved freely in the maze for 7 hours, with one cohort being water-restricted and the other was not. A sample water-restricted mouse trajectory is shown in Figure~\ref{fig:maze_traj}. Difference in water restriction condition resulted in different animal behavior between these two cohorts. For the purposes of our study, we only considered the water-restricted mice. We used the same dataset of trajectories from \citep{ashwood2022dynamic}, which is comprised of 200 mouse trajectories, given as state-action pairs of length $22$ each. We set our probability threshold at $\alpha = 0.001$. With $u_{\mathrm{max}} = 2$, our algorithm learns the unique reward machine shown in Figure~\ref{fig:maze_rm}. This is consistent with the seen behaviors of water-restricted mice which first try to reach the water port and hydrate, after which their behaviors switch to exploring the maze or going back to the home state \citep{zhumulti,ashwood2022dynamic}. More details can be found in Appendix~\ref{sec:extra_mice}. 

\begin{figure}[h!]
    \centering
    \begin{subfigure}[t]{0.25\columnwidth}
        \centering
        \includegraphics[width=\textwidth]{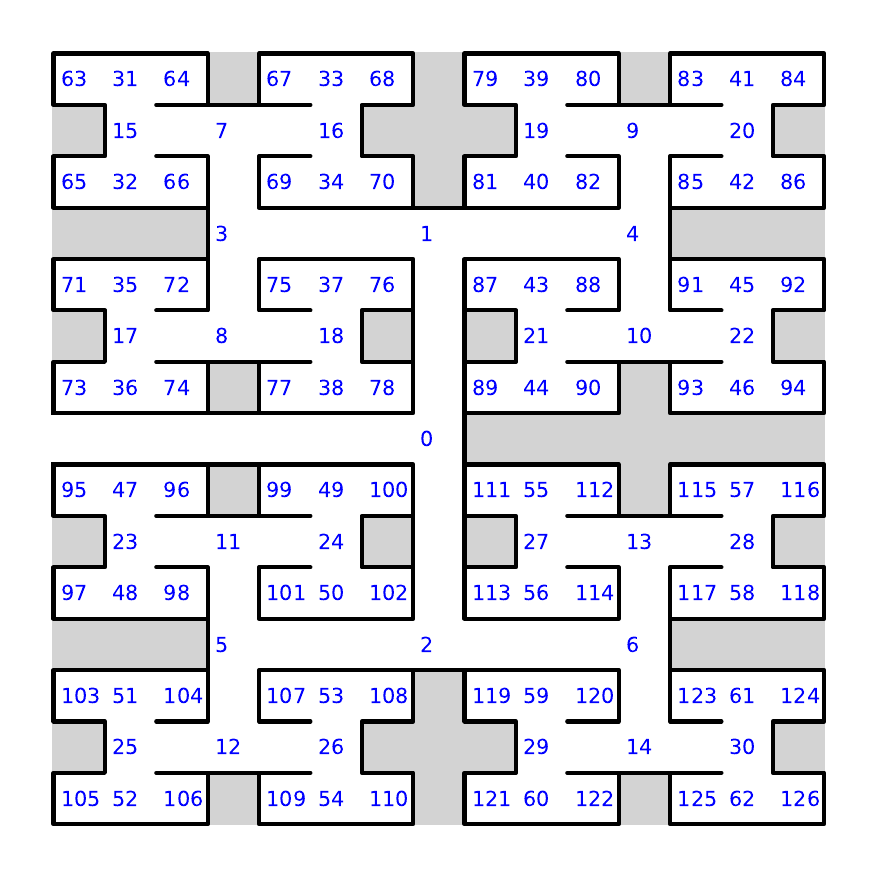}
        \caption{}
        \label{fig:maze}
    \end{subfigure}
    \begin{subfigure}[t]{0.31\columnwidth}
        \centering
        \includegraphics[width=\textwidth]{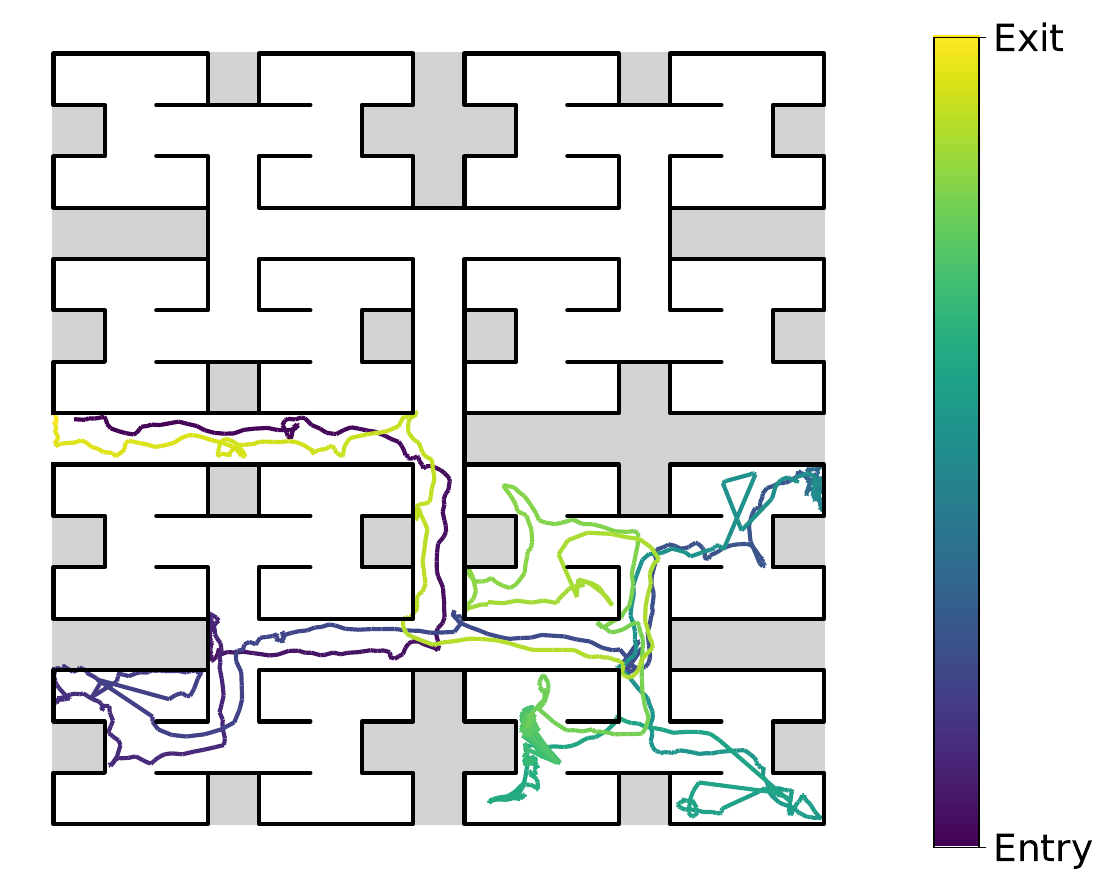}
        \caption{}
        \label{fig:maze_traj}
    \end{subfigure}
    \begin{subfigure}[t]{0.4\columnwidth}
        \centering
        \includegraphics[width=\textwidth]{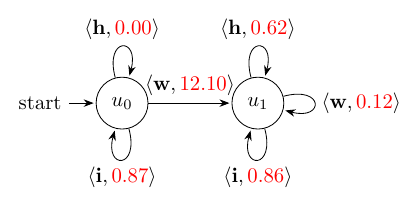}
        \caption{}
        \label{fig:maze_rm}
    \end{subfigure}
    \caption{ Labyrinth experiment. (a): Maze structure and state space definition. (b): Trajectory of a single mouse. (c): Recovered reward machine. (a) and (b) are reprinted from \citep{rosenberg2021mice}.}
    \label{fig:labyritnh}
    \vspace{-0.3cm}
\end{figure}

\pr{We further evaluate the quality of the recovered reward machine using a held-out set of unseen trajectories ($20$ test trajectories). The reward machine model and the product policy are learned from the remaining $180$ training trajectories. For each trajectory in the test set, we compute its log-likelihood under the product policy. 
This metric reflects how well the learned model captures the underlying trajectory distribution and generalizes beyond the training data. The results are shown in Table~\ref{tab:loglikelihood} and represent the average log-likelihood over the test dataset. To contextualize the results, we compare against three baselines: a uniformly random policy and two variants of Max Causal Entropy IRL \cite{ziebart2010modeling}. \textbf{D-IRL} employs a dense feature representation, assigning a distinct reward value to every MDP state.  \textbf{F-IRL} uses a structured feature representation aligned with the environment’s labeling. Specifically, its feature vector is a one-hot encoding indicating whether the agent is at the home port ($\mathbf{h}$), at the water port ($\mathbf{w}$), or in any other intermediate state ($\mathbf{i}$). These results also quantitatively demonstrate the superiority of the learned reward machine in capturing the unseen behaviors of the mice.}
  
\begin{table}[h!]
    \centering
    \begin{tabular}{|c|c|c|c|c|c|}
        \hline
        \textbf{Method} &D-IRL & F-IRL & Uniform& LRM \\ \hline
        \textbf{Average log-likelihood}&$-28.99$ & $-28.57$ & $-30.49$ & $\mathbf{-9.81}$ \\ \hline
    \end{tabular}
    \caption{Average log-likelihood performance of the learned reward machine on unseen trajectories, compared against three baseline methods.}
    \label{tab:loglikelihood}
\end{table}

\section{Limitations and future work}

\pr{The present framework assumes that the atomic propositions provided by the labeling function are exact, meaning no noise or mislabeling is present in the observed labels. While this assumption simplifies the analysis and ensures that the negative examples used in the SAT formulation are correct, it may be unrealistic in settings where the labeling function is derived from perception modules or noisy sensors. Future work could address this limitation by extending the framework to handle noise in the labeling function, for instance by integrating robust RM-learning techniques such as those in~\cite{parac2024learning}, thereby broadening applicability to real-world domains.

Another limitation arises from the requirement to identify all negative examples up to depth $l=|\mathcal{S}|u_{\max}^2$ for the theoretical results to apply (see Theorem~\ref{thm:thm_major}). Although this bound is polynomial in the size of the MDP and the maximum number of RM nodes, the number of distinct negative examples ---and thus the number of clauses in the SAT formulation--- may grow exponentially with $l$ in the worst case.  For large problems, this can lead to SAT instances of prohibitive size and render the approach computationally intractable. On the other hand, in practice, we were able to identify the reward machine model with a much smaller depth in all our examples. Therefore, the existence of a tighter sufficient depth bound is an open problem, which we will investigate further in the future. We are also interested in developing smart prefix-selection algorithms that incorporate a verification loop to check, on the fly, whether the selected negative examples are sufficient. 

}



\section{Conclusion}
In this work, we present a framework for learning reward machines from partially observed policies, where neither the rewards nor the reward machine states are available to the learner. Instead, our method constructs a SAT problem instance from a sufficiently deep prefix tree policy, from which a reward machine equivalent to the true underlying one can be recovered. The approach is generalized to learn the reward machines directly from demonstrations from an optimal expert, where robustness to inaccuracies in policy estimates is handled using finite-sample confidence bounds and MAX-SAT is used to eliminate incorrect negative examples. A diverse set of experiments illustrate the effectiveness of our approach.

{\bf Acknowledgments:} This work is supported in part by ONR CLEVR-AI MURI (\#N00014- 21-1-2431).
 
\newpage
\bibliographystyle{plainnat}  
\bibliography{fixed_references}

\newpage
\appendix
\section*{Appendices}

\section{Proofs}

\subsection{Proof of Proposition~\ref{prop:const_k}}\label{app:proof_const_k}

\begin{proof}
    We proceed by contradiction. Assume that there exists some $\sigma' \in \mathrm{Pref}(\lnm)$, with $|\sigma'| \leq l$, a state $s \in \mathrm{Reach}(\sigma')$ and an action $a \in \mathcal{A}$ such that:
    \begin{equation}\label{eq:ineq}
        \ptp^{\mathrm{learned}}(a|s,\sigma') \neq \ptp^{\mathrm{true}}(a|s,\sigma').
    \end{equation}
    Let $u \triangleq  \delta_{\textbf{u}}^{*,\mathrm{learned}}(u_I, \sigma')$. The left-hand-side of \eqref{eq:ineq} can be written as:
    \begin{align*}
         &\ptp^{\mathrm{learned}}(a|s,\sigma') = \pi_{\mathrm{Prod}}^{\mathrm{learned}}(a|s,u) = \ptp^{\mathrm{true}}(a|s,\sigma), \text{ for some } \sigma \in \mathrm{Path}(u)\cap \plnm, \text{ with }|\sigma| \leq l,
    \end{align*}
    where the second equality is due to \eqref{eq:const}. We get:
    \begin{equation*}
        \ptp^{\mathrm{true}}(a|s,\sigma) \neq \ptp^{\mathrm{true}}(a|s,\sigma'), \text{ where both } \sigma, \sigma' \in \mathrm{Path}(u). 
    \end{equation*}
    More precisely, $\delta_{\textbf{u}}^{*,\mathrm{learned}}(u_I, \sigma) = \delta_{\textbf{u}}^{*,\mathrm{learned}}(u_I, \sigma')$. Due to the contrapositive of Lemma~\ref{lem:neg}, we have a contradiction. Similarly, due to the full specification condition of \sectionref{sec:fs} and \eqref{eq:const}, the support of the two prefix policies will be the same by construction. 
\end{proof}

\subsection{Proof of Proposition~\ref{prop:prop6}}\label{app:proof_prop6}
We start by formally defining some important concepts that will be central for proving our result. Our proof idea requires reasoning about joint (i.e. synchronized) paths over two distinct reward machine models, and being able to shrink these joint paths by removing cycles (i.e. loops). To start, we define \emph{cycles} in a product MDP model $\mathcal{G}\times \mathcal{M}_L$ as follows: 
\begin{definition}
    Given a product MDP model $\mathcal{G}\times \mathcal{M}_L$ and a proposition sequence $\sigma = l_1 \cdots l_k$, generated from a state sequence $\tau = (s_1,s_2,\cdots, s_k)$ (i.e., $\sigma \in \plnm$), we say that a subsequence $\sigma_{i:j}$ of $\sigma$ is a \textbf{cycle} in $\mathcal{G}\times \mathcal{M}_L$ if $s_i = s_j$ and $\delta_{\textbf{u}}^*(u_I,\sigma_{:i}) = \delta_{\textbf{u}}^*(u_I,\sigma_{:j})$.
\end{definition}

We will use the above definition to construct shorter label sequences with no cycles given a long label sequence. In particular, let $ l_c \triangleq |\mathcal{S}||\mathcal{U}|$ be the number of states in $\mathcal{G} \times \mathcal{M}_L$. By the pigeonhole principle, we know that any state trajectory of length more than $l_c$ has to visit some product state more than once, meaning that it has at least one cycle. In particular, given any proposition sequence $\sigma$, with $|\sigma| > l_c$, let $\bar \sigma$ be the subsequence of $\sigma$ obtained by removing all the cycles in $\sigma$. Then, we know that $|\bar\sigma|\leq l_c$, since $\bar \sigma$ has at most all the unique states from $\sigma$, which cannot exceed $l_c$. Note that removing cycles preserves the last product state reached from following $\sigma$.

Next, we define \emph{synchronized reward machine models}.
\begin{definition}\label{def:dsync}
    Let $\mathcal{G}_1 = (\mathcal{U}_1, u_I^1, \mathrm{AP}, \delta_{\mathbf{u}}^1),\  \mathcal{G}_2 = (\mathcal{U}_2, u_I^2, \mathrm{AP}, \delta_{\mathbf{u}}^2)$ be two reward machine models, with $|\mathcal{U}_1|,|\mathcal{U}_2|\leq u_{\mathrm{max}}$. The \textbf{synchronized reward machine model} is a reward machine model defined as follows:
    \begin{align*}
    \mathcal{G}^{\mathrm{sync}} &= (\mathcal{U}^{\mathrm{sync}}, u_I^{\mathrm{sync}}, \mathrm{AP}, \delta_{\mathbf{u}}^{\mathrm{sync}}) \\
    \mathcal{U}^{\mathrm{sync}} &= \mathcal{U}_1\times \mathcal{U}_2, \\
    u_I^{\mathrm{sync}} &= (u_I^1,u_I^2), \\
    \delta_{\mathbf{u}}^{\mathrm{sync}}((u_i^1,u_j^2), l) &= (\delta_{\mathbf{u}}^1(u_i^1,l), \delta_{\mathbf{u}}^2(u_j^2,l)), \quad l \in \mathrm{AP}. 
\end{align*}
\end{definition}

Similarly to a regular reward machine model,  the product $\mathcal{G}^{\mathrm{sync}} \times \mathcal{M}_L$ is well defined. The total number of states in $\mathcal{G}^{\mathrm{sync}} \times \mathcal{M}_L$ is upper bounded by $l = |\mathcal{S}|u_{\mathrm{max}}^2$. In particular, consider an arbitrary label sequence $\sigma \in \mathrm{Pref}(\lnm)$, generated from a state sequence $\tau = (s_1,s_2,\cdots, s^*)$, with $|\sigma| > |\mathcal{M}|u_{\mathrm{max}}^2$, and let $(u,u') = \delta_{\mathbf{u}}^{\mathrm{sync}, *}(u_I^{\mathrm{sync}}, \sigma)$. This means that the synchronized product state $(u,u',s^*)$ is reachable in $\mathcal{G}^{\mathrm{sync}} \times \mathcal{M}_L$. Thus, by removing cycles of $\sigma$ in  $\mathcal{G}^{\mathrm{sync}} \times \mathcal{M}_L$, we can construct a shorter prefix $\bar \sigma$, with $|\bar \sigma| \leq |\mathcal{M}|u_{\mathrm{max}}^2$, such that $(u,u') = \delta_{\mathbf{u}}^{\mathrm{sync}, *}(u_I^{\mathrm{sync}}, \bar \sigma)$, and $s^*$ is the MDP state reached.

 We are now ready for the proof of Proposition~\ref{prop:prop6}. We provide Figure~\ref{fig:proof_illustration} as an illustration of the proof. 

\begin{proof}
    For a shorthand notation, denote the first SAT instance in the proposition statement as $\texttt{SAT}_{{u_{\mathrm{max}}},l}$ and the second SAT instance as $\texttt{SAT}_{{u_{\mathrm{max}}},j}$. Also note that $\mathcal{E}^{-}_l \subseteq \mathcal{E}^{-}_j$ when $j\geq l$ as the negative examples can only grow as the depth increases. Throughout the proof, we interchange $\ptp^j$ and $\ptp$ as both are equal up to depth $j$. 

We need to show that the additional negative examples in $\mathcal{E}^{-}_j \setminus \mathcal{E}^{-}_l $ do not change the set of satisfying assignments. The $\impliedby$ direction is straightforward, since a satisfying assignment cannot become unsatisfying by removing constraints. 

For the $\implies$ direction, take a satisfying assignment $\{B_k\}_{k=1}^{|\mathrm{AP}|}$ for
$\texttt{SAT}_{{u_{\mathrm{max}}},l}$ and assume by contradiction that $\{B_k\}_{k=1}^{|\mathrm{AP}|}$ is not satisfying for $\texttt{SAT}_{{u_{\mathrm{max}}},j}$ with $j>l$. Consider the reward machine model $\mathcal{G}^{\mathrm{learned}}$, with the transition function $\delta_{\textbf{u}}^{\mathrm{learned}}$, corresponding to $\{B_k\}_{k=1}^{|\mathrm{AP}|}$. Since $\{B_k\}_{k=1}^{|\mathrm{AP}|}$ is unsatisfying for $\texttt{SAT}_{{u_{\mathrm{max}}},j}$, then there exists a negative example $\{\sigma, \sigma'\}\in \mathcal{E}^{-}_j \setminus \mathcal{E}^{-}_l$ such that
\begin{equation}\label{eq:contra_assump}
 \delta_{\textbf{u}}^{\mathrm{learned},*}(u_I,\sigma) = \delta_{\textbf{u}}^{\mathrm{learned},*}(u_I,\sigma'),   
\end{equation}
while $\ptp^j(a^*|s^*,\sigma) \neq \ptp^j(a^*|s^*,\sigma')$ for some $(s^*,a^*) \in \mathcal{S}\times \mathcal{A}$. These two facts are shown as ${\color{green} \neq_{\ptp}}$ and ${\color{red}=_u}$ connecting $\sigma$ and $\sigma'$ in Figure~\ref{fig:proof_illustration}. 

Now, let $\mathcal{G}^{\mathrm{true}}= (\mathcal{U}, u_I, \mathrm{AP}, \delta_{\textbf{u}}^{\mathrm{true}})$ be a reward machine model consistent with $\ptp$. We define the following nodes, along with the associated product states:
\begin{align}
    u^{\mathrm{true}} &\triangleq  \delta_{\textbf{u}}^{\mathrm{true}} (u_I, \sigma), \quad  (u^{\mathrm{true}},s^*) \in \mathcal{G}^{\mathrm{true}}\times \mathcal{M}_L, \notag \\
    u^{\mathrm{learned}} &\triangleq  \delta_{\textbf{u}}^{\mathrm{learned}} (u_I, \sigma),  (u^{\mathrm{learned}},s^*) \in \mathcal{G}^{\mathrm{learned}}\times \mathcal{M}_L, \notag\\
    u^{\mathrm{true},\prime} &\triangleq  \delta_{\textbf{u}}^{\mathrm{true}} (u_I, \sigma'), \quad  (u^{\mathrm{true},\prime},s^*) \in \mathcal{G}^{\mathrm{true}}\times \mathcal{M}_L,\notag \\
    u^{\mathrm{learned},\prime} &\triangleq  \delta_{\textbf{u}}^{\mathrm{learned}} (u_I, \sigma'),  (u^{\mathrm{learned},\prime},s^*) \in \mathcal{G}^{\mathrm{learned}}\times \mathcal{M}_L. \notag
\end{align}
Hence, by \eqref{eq:contra_assump},  we have that $u^{\mathrm{learned}} = u^{\mathrm{learned},\prime}$.
Let $\mathcal{G}^{\mathrm{sync}}$ be the synchronized reward machine model between $\mathcal{G}^{\mathrm{learned}}$ and $\mathcal{G}^{\mathrm{true}}$ according to Definition~\ref{def:dsync}. We observe the following:
\begin{align*}
    (u^{\mathrm{true}}, u^{\mathrm{learned}}) &= \delta_{\mathbf{u}}^{\mathrm{sync}, *}(u_I^{\mathrm{sync}}, \sigma), \quad (u^{\mathrm{true}}, u^{\textsf{learned}},s^*) \in \mathcal{G}^{\textsf{sync}}\times \mathcal{M}_L,\\
    (u^{\textsf{true},\prime}, u^{\textsf{learned},\prime}) &= \delta_{\mathbf{u}}^{\textsf{sync}, *}(u_I^{\textsf{sync}}, \sigma'), \quad(u^{\textsf{true},\prime}, u^{\textsf{learned},\prime},s^*) \in \mathcal{G}^{\textsf{sync}}\times \mathcal{M}_L.
\end{align*}
This means that the synchronized product states $ (u^{\textsf{true}}, u^{\textsf{learned}},s^*)$ and $(u^{\textsf{true},\prime}, u^{\textsf{learned},\prime},s^*)$ are both reachable in $\mathcal{G}^{\textsf{sync}} \times \mathcal{M}_L$. Thus, by removing cycles, there must exist shorter sequences, $\bar \sigma, \bar \sigma'$, with $|\bar \sigma|,|\bar \sigma'|\leq |\mathcal{S}|u_{\textsf{max}}^2$, such that: 
\begin{align}\label{eq:shorter_seq}
    (u^{\textsf{true}}, u^{\textsf{learned}}) &= \delta_{\mathbf{u}}^{\textsf{sync}, *}(u_I^{\textsf{sync}}, \bar \sigma), \notag \\
    (u^{\textsf{true},\prime}, u^{\textsf{learned},\prime}) &= \delta_{\mathbf{u}}^{\textsf{sync}, *}(u_I^{\textsf{sync}}, \bar \sigma').
\end{align}
Note that $s^*$ is still the reached MDP state in both synchronized product nodes above. By the definition of $\delta_{\mathbf{u}}^{\textsf{sync},*}$, we can decompose \eqref{eq:shorter_seq} into:
\begin{align}\label{eq:shrink}
     u^{\mathrm{true}} &=                    \delta_{\textbf{u}}^{\mathrm{true}} (u_I, \bar \sigma),\ 
    u^{\mathrm{learned}} =                 \delta_{\textbf{u}}^{\mathrm{learned}} (u_I, \bar \sigma), \notag\\
    u^{\mathrm{true},\prime} &=  \delta_{\textbf{u}}^{\mathrm{true}} (u_I, \bar \sigma'),\ 
    u^{\mathrm{learned},\prime} =  \delta_{\textbf{u}}^{\mathrm{learned}} (u_I, \bar \sigma').
\end{align}
This means that $\sigma$ and $\bar \sigma$ lead to the same node $u^{\mathrm{true}}$ in $\mathcal{G}^{\mathrm{true}}$. Similarly, $\sigma'$ and $\bar \sigma'$ both lead to the same node $u^{\mathrm{true},\prime}$.
Since $\mathcal{G}^{\mathrm{true}}$ is consistent with $\ptp$, the following holds:
\begin{align}\label{eq:equalities}
    \ptp(a|s^*,\bar \sigma) &= \ptp(a|s^*,\sigma) ,\quad  \forall a \in \mathcal{A}, \notag \\
    \ptp(a|s^*,\bar \sigma') &= \ptp(a|s^*,\sigma') ,\quad \forall a \in \mathcal{A}.
\end{align}
Since $\ptp(a^*|s^*, \sigma) \neq \ptp(a^*|s^*, \sigma')$ due to our contradiction assumption, we conclude from \eqref{eq:equalities} that $\ptp(a^*|s^*,\bar \sigma) \neq \ptp(a^*|s^*,\bar \sigma')$. However, $u^{\mathrm{learned}} = u^{\mathrm{learned},\prime}$ combined with \eqref{eq:shrink} implies that $\delta_{\textbf{u}}^{\mathrm{learned}} (u_I, \bar \sigma) = \delta_{\textbf{u}}^{\mathrm{learned}} (u_I, \bar \sigma')$, contradicting that  $\{B_k\}_{k=1}^{|\mathrm{AP}|}$ is a SAT assignment for the depth $l$. This concludes the proof.
\end{proof}

\begin{figure}[htb!]
    \centering
    \includegraphics[width = 0.65\textwidth]{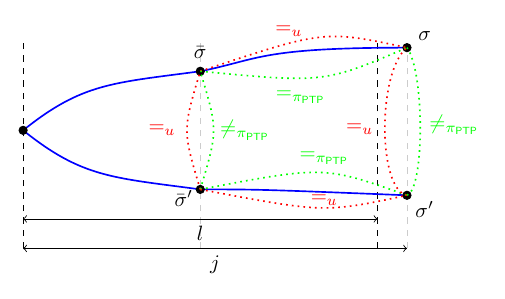}  
\caption{Proof Illustration of Proposition~\ref{prop:prop6}. ${\color{green}=_{\ptp}},{\color{green}\neq_{\ptp}}$ means that the prefix tree policy $\ptp$ is equal/different for the corresponding sequences. ${\color{red}=_u}$ means that the corresponding sequences arrive at the same node in $\mathcal{G}^{\mathrm{learned}}$.}
    \label{fig:proof_illustration}
\end{figure}


\section{Experiment Details and Additional Results}
\subsection{Tabular BlockWorld MDP}
To generate the expert trajectory dataset, we first compute the product policy given the BlockWorld MDP dynamics and the corresponding ground-truth reward machine. We use soft bellman iteration to find the optimal product policy. The procedure is summarized in Algorithm~\ref{alg:soft_bellman}.

\begin{algorithm}
\caption{Soft Bellman Iteration on the Product MDP}\label{alg:soft_bellman}
\KwIn{MDP $\mathcal{M}$, ground-truth reward machine $\mathcal{R}$, labeling function $L$, tolerance $\epsilon$.}
\KwOut{Optimal product policy $\pi^*$}

Construct product MDP 
$\mathcal{M}_{\mathrm{Prod}} = \bigl(\mathcal{S}',\;\mathcal{A}',\;\mathcal{P}',\;\mu_0',\;\gamma',\;r'\bigr)$
as in Section~\ref{sec:prod_mdp}.

Initialize 
\[
V_0(s,u)\gets 0
\quad\forall (s,u)\in \mathcal{S}', 
\quad
\text{error}\gets\infty,
\quad
k\gets 0
\]

\While{\(\text{error}>\epsilon\)}{
    \ForEach{\((s,u)\in\mathcal{S}'\)}{
        \ForEach{\(a\in\mathcal{A}'\)}{
            \(Q_{k+1}(s,u,a)\;\gets\;r'\bigl((s,u),a\bigr)\;+\;\gamma'\;\sum_{(s',u')\in\mathcal{S}'}\mathcal{P}'\bigl((s,u),a,(s',u')\bigr)\,V_k(s',u')\)
        }
        \(V_{k+1}(s,u)\;\gets\;\log\!\displaystyle\sum_{a\in\mathcal{A}'}\exp\!\bigl(Q_{k+1}(s,u,a)\bigr)\)
    }
    \(\text{error}\;\gets\; \max_{(s,u)\in\mathcal{S}'}\bigl|V_{k+1}(s,u)-V_k(s,u)\bigr|\)\\
    \(k\;\gets\;k+1\)
}

\ForEach{\((s,u)\in\mathcal{S}',\,a\in\mathcal{A}'\)}{
    \(\displaystyle
    \pi^*(a\mid s,u)
    \;\gets\;
    \exp\!\bigl(Q_{k}(s,u,a)-V_{k}(s,u)\bigr)
    \)
}

\Return{\(\pi^*\)}
\end{algorithm}
At each time step, we use the visited proposition prefix $\sigma$ to find the reached node $u$ on the true reward machine, using $u = \delta_\mathrm{u}^*(u_I,\sigma)$. We then sample an action from the true product policy $\pi^*(a|s,u)$, where $s$ is the MDP state reached. We keep count of the sampled actions at all state-prefix pairs for constructing the prefix tree policy. The procedure is summarized in Algorithm~\ref{alg:ptp_construct}.
\begin{algorithm}
\caption{Constructing the Prefix-Tree Policy via Simulation}\label{alg:ptp_construct}
\KwIn{MDP $\mathcal{M}$, reward machine $\mathcal{R}$, true product policy $\pi^*(a\mid s,u)$, \# trajectories $N$, trajectory length $H$}
\KwOut{Prefix-tree policy $\pi_{\mathrm{PTP}}^{H}$}

Initialize counts: $C(\sigma, s, a)\gets 0$\quad for all prefixes $\sigma$, states $s\in\mathcal{S}$, actions $a\in\mathcal{A}$.

\For{$i\gets 1$ \KwTo $N$}{
    $\sigma \gets \epsilon$\quad (empty prefix)\\
    $s \sim \mu_0$\quad (initial MDP state)\\
    $u \gets u_I$\quad (initial RM node)\\
    \For{$t\gets 1$ \KwTo $H$}{
        $u \gets \delta_{\mathrm{u}}(u, \sigma)$\;  
        Sample $a \sim \pi^*(\cdot \mid s,u)$\;  
        Execute $a$, observe $s' \sim \mathcal{P}(s,a)$ and label $\ell \gets L(s')$\;  
        $\sigma \gets \sigma \,\Vert\, \ell$ \quad (concatenation)\;  
        $C(\sigma, s, a) \gets C(\sigma, s, a) + 1$\;  
        $s \gets s'$\;  
    }
}

\ForEach{prefix $\sigma$, state $s$, action $a$}{
    $\displaystyle
    \pi_{\mathrm{PTP}}^{H}(a \mid s, \sigma)
    \;=\;
    \frac{C(\sigma, s, a)}{\sum_{a'\in\mathcal{A}}C(\sigma, s, a')}\,.
    $
}

\Return{$\pi_{\mathrm{PTP}}^{H}$}
\end{algorithm}
In the following section, we show the results for the \texttt{stack} and \texttt{stack-avoid} tasks.

\subsubsection{(\texttt{stack}) task}\label{app:st}
We simulated varying size demonstration datasets and solved our SAT problem with each. In Table~\ref{tab:summ_fs_bwe}, we show the effect of the number of demonstrations in reducing the number of satisfying solutions. With a demonstration dataset of size $1M$, the ground-truth (\texttt{stack}) reward machine is recovered up-to-renaming. However, even a demonstration set of size $5000$ can indeed reduce the number of satisfying solutions to $4$. These $4$ solutions include the ground-truth reward machine (accounting for $2$ solutions due to renaming), and the reward machine (up-to-renaming again) shown in Figure~\ref{fig:stacking_false_rec}, where $\delta_{\mathrm{u}}(u_2, \mathbf{a}) = u_1$, instead of $\delta_{\mathrm{u}}(u_2, \mathbf{a}) = u_2$ as in the ground-truth reward machine.  

\begin{table}[h!]
    \centering
    \begin{tabular}{|c|c|c|c|c|}
    \hline
       $|\mathfrak{D}|$  & $|\tau|$ & $|\mathcal{E}^-|$ & $\alpha $&\# SAT solutions  \\ \hline
         1000 & 20& 58 & 0.05 & 24  \\ \hline
         3000 & 20& 174 & 0.05 & 12  \\ \hline
         5000 & 20 & 305 & 0.05 & 8 \\ \hline
         10000 &20 & 490 & 0.05 & 8\\ \hline
         0.1M &20 & 2249 & 0.05 & 4\\ \hline
         1M & 20&  7995 & 0.05& 2 \\ \hline
    \end{tabular}
    \caption{Size of the simulated dataset vs. the number of satisfying solutions.}
    \label{tab:summ_fs_bwe}
\end{table}
  
\begin{figure}[h!]
    \centering
    \begin{subfigure}[t]{0.44\columnwidth}
        \centering
        \includegraphics[width=\textwidth]{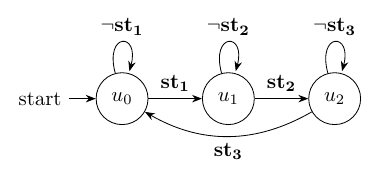}
        \caption{}
        \label{fig:mstacking_true_rec}
    \end{subfigure}
    \begin{subfigure}[t]{0.44\columnwidth}
        \centering
        \includegraphics[width=\textwidth]{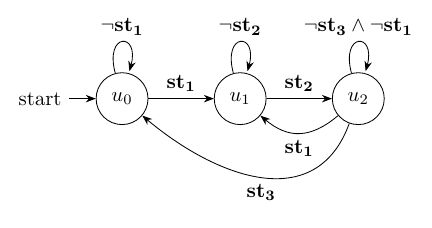}
        \caption{}
        \label{fig:stacking_false_rec}
    \end{subfigure}
    \caption{ Recovered Reward Machine Models for \texttt{stack} task.}
    \label{fig:rec_rm}
    \vspace{-0.3cm}
\end{figure}
\subsubsection{(\texttt{stack-avoid}) task}\label{app:st_av}
In Table~\ref{tab:summ_fs_bwe_bad}, we similarly show the effect of the size of the simulated dataset on the number of satisfying solutions for the \texttt{stack-avoid} task. In this task, $1M$ trajectories were not enough to extract all the necessary negative examples to reduce the number of solutions to $2$. However, a much smaller dataset of size $1000$ was enough to reduce the number of solutions to $\mathbf{4}$, which we can manually inspect. These solutions are shown in Figures~\ref{fig:rm_bac_1} and ~\ref{fig:rm_bac_2} (up-to-renaming). The reward machine in Figure~\ref{fig:rm_bac_2} differs from the ground-truth reward machine by assigning $\delta_\mathrm{u}(u_1, {\color{blue} \mathbf{st_{bd}}}) = u_1$, instead of $\delta_\mathrm{u}(u_1, {\color{blue} \mathbf{st_{bd}}}) = u_2$. Such an ambiguity could have been resolved by a negative example of the form $\{\sigma_1 = \mathbf{st_1,i,st_1}, \sigma_2 = \mathbf{st_1,i, {\color{blue} \mathbf{st_{bd}}}, i,st_1} \}$, which the MDP is capable of producing given the length of the simulated trajectories. However, since the policy becomes uniformly random after reaching ${\color{blue} \mathbf{st_{bd}}}$, simulating such negative examples might require a much larger dataset.
\begin{table}[h!]
    \centering
    \begin{tabular}{|c|c|c|c|c|}
    \hline
       $|\mathfrak{D}|$  & $|\tau|$ & $|\mathcal{E}^-|$ & $\alpha $&\# SAT solutions  \\ \hline
       200 & 20& 26 & 0.05 & 32  \\ \hline
        500 & 20& 52 & 0.05 & 8  \\ \hline
         1000 & 20& 67 & 0.05 & 8  \\ \hline
         0.1M &20 & 1615 & 0.05 & 4 \\ \hline
         1M & 20& 5330  & 0.05& 4  \\ \hline
    \end{tabular}
    \caption{Size of the simulated dataset vs. the number of satisfying solutions.}
    \label{tab:summ_fs_bwe_bad}
\end{table}

\begin{figure}[h!]
    \centering
    \begin{subfigure}[t]{0.44\columnwidth}
        \centering
        \includegraphics[width=\textwidth]{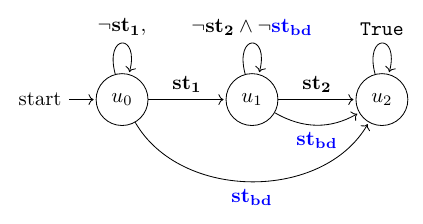}
        \caption{}
        \label{fig:rm_bac_1}
    \end{subfigure}
    \begin{subfigure}[t]{0.44\columnwidth}
        \centering
        \includegraphics[width=\textwidth]{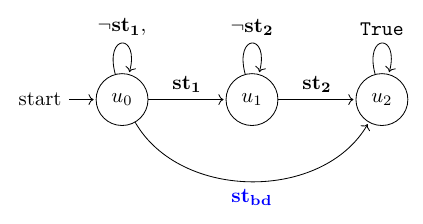}
        \caption{}
        \label{fig:rm_bac_2}
    \end{subfigure}
    \caption{ Recovered Reward Machine Models for \texttt{stack-avoid} task.}
    \label{fig:rec_rm_total}
    \vspace{-0.3cm}
\end{figure}

\subsection{2-Link Reacher Robotic Arm with Continuous State Space}\label{sec:extra_2dreacher}

\subsubsection{Expert policy training using deep RL}
To train the policy, we used the standard \texttt{Reacher-v5} environment from Gymnasium, augmented with custom wrappers to discretize the action space and introduce randomized initial conditions. We also increased the episode horizon from the default $50$ steps to $160$ steps in order to match the expected time to visit all 3 desired poses and finish the task. The \texttt{DiscreteReacherActionWrapper} maps the continuous two-dimensional action space to a \texttt{MultiDiscrete} space of 25 torque combinations by restricting each component to one of five values: \{-1.0, -0.5, 0.0, 0.5, 1.0\}. Additionally, the \texttt{ForceRandomizedReacher} wrapper introduces randomized initial joint positions and velocities at each episode reset to promote robustness and generalization. This modification was essential, as the default initial conditions in the base environment are typically limited to small perturbations around fully-extended arm configurations. Training solely from such narrow initializations led to poor task performance and limited the policy's ability to generalize across the broader state space.

We trained a Proximal Policy Optimization (PPO) agent using the \texttt{stable-baselines3} library, with all the default settings and hyperparameters. To accelerate training, we employed a vectorized environment with 50 parallel instances running on CPU. The agent was trained for 20 million time steps using a multilayer perceptron (MLP) policy.

\subsubsection{Discretizing the state space and sampling the policy}
The MDP state space is generated by discretizing the shoulder angle $\theta_1$, the elbow angle $\theta_2$, and the angular velocities $\dot \theta_1, \dot \theta_2$. The angles are discretized uniformly in $[-\pi,\pi]$ with a bin size of $10^\circ  \approx 0.17$ radians. The angular velocities are discretized uniformly in $[-14,14]$ with a bin size of $0.25$ rad/s. This leads to $\approx 17.4M$ states in the MDP. 

When sampling an action from the optimal policy, we first identify the discrete state corresponding to the current continuous state. The action is then sampled at the center point of that discrete region. This approach standardizes the policy’s behavior across different continuous states that map to the same discrete representation, ensuring consistency within each discretized region of the state space.

\subsection{Real-world Mice Navigation}\label{sec:extra_mice}

For recovering the reward function, we first need to construct the product policy. However, real-world data generally do not perfectly fit the assumed mathematical models, i.e., the max entropy assumption. Since most of the states in the MDP are unvisited, and many action are not sampled at most of the states, the product policy is predominantly sparse ($\sim 92\%$ of the action probabilities are $0$). We preprocess the policy by clipping its minimum value to $0.05$. This allows us to use the results from \citep{shehab2024learning} to compute the reward function, as we require computing the $\log$ of the policy. The reward space is constrained to satisfy the feature-decomposition which the reward machine provides. However, there is generally no guarantee that we can find such a featurized reward function that can perfectly induce any product policy. With the shown reward in Figure~\ref{fig:maze_rm}, the norm difference between the induced policy and the true policy is $2.66$ (the norm difference with a uniformly random policy is $2.97$). By increasing $u_{\mathrm{max}}$ to $3$, one of the recovered reward machines is shown in Figure~\ref{fig:mice_rm_umax_3}, which yields a slightly larger policy difference norm of $2.71$. 

\begin{figure}[!t]
    \centering
    
    \begin{minipage}{0.48\textwidth}
        \centering
        \includegraphics[width=\linewidth]{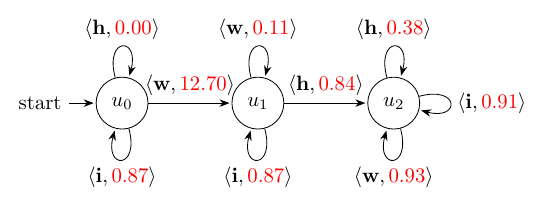}
        \caption{Recovered Reward machine for Mice Dataset with $u_{\mathrm{max}} = 3$.}
        \label{fig:mice_rm_umax_3}
    \end{minipage}
    \hfill
    \begin{minipage}{0.44\textwidth}
        \centering
        \includegraphics[width=\linewidth]{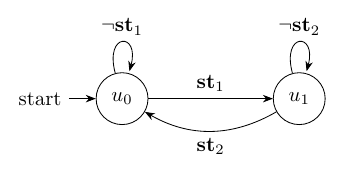}
        \caption{Reward machine model considered in Appendix~\ref{app:subopt}.}
        \label{fig:extra}
    \end{minipage}

\end{figure}

\section{Policy Optimality is not Required}\label{app:subopt}
 \pr{In this section, we illustrate with a toy example that the optimality of the prefix tree policy is not necessary to recover the reward machine model. We consider the MDP given by the BlockWorld of \sectionref{sec:bwmdp}, and a simple 2-node reward machine model given by \figureref{fig:extra}. We do not specify a reward function here nor solve the forward MaxEnt RL problem before simulating the dataset, and instead specify an arbitrary stochastic policy at each node of the reward machine. With a dataset size $50K$ and trajectory length $20$, our method learns uniquely the reward machine model.}

\section{Sensitivity to the Node Bound $u_\mathrm{max}$ }\label{app:sens}
\vspace{-0.4cm}
\pr{To understand the effect of misspecifying the number of nodes on the resulting learned reward machine, we run an additional experiment in the Gridworld benchmark (see Section~\ref{sec:ex:gridWorld}) while varying the bound on the number of nodes. We simulate a dataset of $2,500$ trajectories of length $10$ each. We then construct the negative example set using our finite sample results and solve for a reward machine model using MaxSAT. The recovered models are shown in \figureref{fig:rec_rm_sens} below. These reward machine models still qualitatively capture the overall task even with the limited number of nodes. To quantitatively assess the quality of these models, we first learn a product policy for each using Algorithm~\ref{alg:learned_product_policy} below. We then perform trajectory rollouts to compute the return of the policy consistent with each model. While the actions are sampled according to the corresponding learned policy of each model, the reward is calculated based on the traversed nodes in the ground-truth reward machine, given by \figureref{fig:patrol_rm}\footnote{For this experiment we set the reward for completing the task at $10$.}. Results in Table~\ref{tab:add_quant_rm} show a graceful degradation in performance as the number of reward machine nodes decreases. We also implemented a Max Entropy IRL baseline~\cite{ziebart2008maximum}, where a static reward is learned from the demonstration set.}

\begin{figure}[h!]
    \centering
    \begin{subfigure}[t]{0.4\columnwidth}
        \centering
        \includegraphics[width=\textwidth]{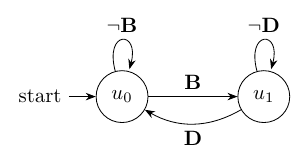}
        \caption{$u_\mathrm{max}=2$}
        \label{fig:rm_bac_1_1}
    \end{subfigure}
    \begin{subfigure}[t]{0.44\columnwidth}
        \centering
        \includegraphics[width=\textwidth]{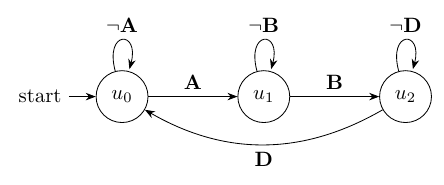}
        \caption{$u_\mathrm{max}=3$}
        \label{fig:rm_bac_2_2}
    \end{subfigure}
    \caption{ Recovered reward machine models with varying node bounds.}
    \label{fig:rec_rm_sens}
    \vspace{-0.3cm}
\end{figure}
.

\begin{table}[h!]
    \centering
    \begin{tabular}{|c|c|c|c|c|c|c|}
    \hline
       Model  & Total $\#$ clauses & $\#$ used clauses & Rollouts & $|\tau|$ & Average Return per Rollout \\ \hline
         \hline
       $u_{\mathrm{max}} = 4$  & 1112 & 1112 & 10K & 100 & 176.62 \\
        \hline
        $u_{\mathrm{max}} = 3$ & 1112 & 902 & 10K & 100 & 148.74 \\
        \hline
        $u_{\mathrm{max}} = 2$ & 1112 & 589 & 10K & 100 & 94.25 \\  \hline
        D-IRL  & - & - & 10K & 100 & 17.12 \\ \hline
         F-IRL  & - & - & 10K & 100 & 14.49 \\
\hline
    \end{tabular}
    \caption{Statistics for the varying bound experiment. (Total $\#$ clauses) is the number of negative examples from the finite dataset. ($\#$ used clauses) is the number of negative examples which MaxSAT included as hard constraints. (Rollouts) is the total number of rollouts. $(|\tau|)$ is the length of the trajectory. (Average Return per Rollout) is the sum of accumulated rewards in all the rollouts divided by the number of rollouts.}
    \label{tab:add_quant_rm}
\end{table}


\begin{algorithm}
\caption{Construct Learned Product Policy}\label{alg:learned_product_policy}
\KwIn{
    Learned reward machine model $\widehat{\mathcal{G}}$,
    Ground-truth reward machine model $\mathcal{G}$,
    True prefix tree policy $\ptp$,
    Labeling function $L$, MDP $\mathcal{M}$, prefix length $d_p$
}
\KwOut{Learned product policy $\widehat{\pi}$}

\ForEach{node $u \in \widehat{\mathcal{G}}$}{
    Compute $\text{Reach}(u)$ \;\tcp{we consider prefixes up to a fixed length of $d_p$}

    \ForEach{$\sigma \in \mathrm{Paths}(u) $}{
        Compute $\mathrm{Reach}(\sigma)$\;

        \ForEach{$s \in \mathrm{Reach}(\sigma)$}{
            \ForEach{$a \in \mathcal{A}$}{
                \(
                \widehat{\pi}(a \mid s, u) \mathrel{+}=  \ptp(a \mid s, \sigma)
                \)
            }
        }
    }
}
Normalize $\widehat{\pi}$ and fill in zero rows with uniform policy\;
\Return{$\widehat{\pi}$}
\end{algorithm}

\vspace{10cm}
\section{Transferability Performance}\label{app:transfer}
\pr{In this section, we test the transferability of our recovered reward machines when deployed on a different environment with the same labeling function. In particular, the new environment is similar to the Tabular GridWorld MDP studied in Section~\ref{sec:ex:gridWorld}, except that the room assignment is changed, as shown in Figure~\ref{fig:mod_rooms} below. The dynamics are kept the same. In order to test transferability, we first generate a trajectory dataset using the ground-truth optimal policy in the new environment. Then, for each reward machine model, given by a varied $u_{\mathrm{max}}$, we construct a product policy according to Algorithm~\ref{alg:learned_product_policy} (using the prefix tree policy of the original environment) and solve for a reward function that will be optimized in the new environment. We then compute the log-likelihood of the trajectory dataset given the corresponding optimized policies. The results are shown in Figure~\ref{fig:avg_ll_trnasfer}, where (TRUE) is the log-likelihood of the ground-truth optimal policy in the new environment. It can be seen that the performance of the learned reward machine with $4$ nodes almost matches that of the ground-truth reward machine. The results also display a graceful degradation in performance with decreasing number of nodes.}

\begin{figure}[h]
    \centering
    \begin{subfigure}[t]{0.3\columnwidth}
        \centering
        \includegraphics[width=\textwidth]{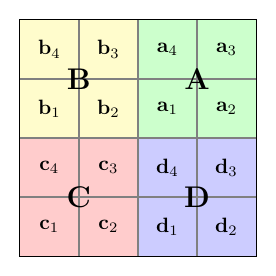}
        \caption{}
        \label{fig:mod_rooms}
    \end{subfigure}
    \begin{subfigure}[t]{0.65\columnwidth}
        \centering
        \includegraphics[width=\textwidth]{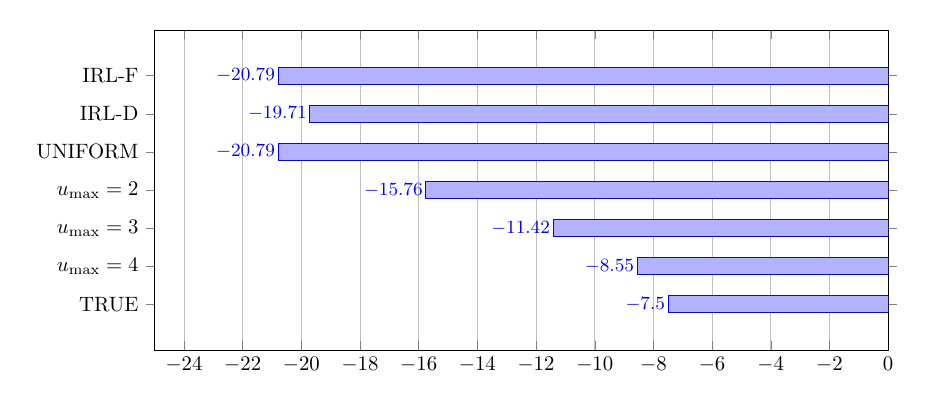}
        \caption{}
        \label{fig:avg_ll_trnasfer}
    \end{subfigure}
    \caption{(a): Modified labeled MDP. (b): Average log-likelihood of different models over 100 trajectories generated by the optimal policy in the new labeled MDP.}
    \label{fig:transfer_exp}
    \vspace{-0.3cm}
\end{figure}

\end{document}